%% file: main.tex
\documentclass[journal]{IEEEtran}

\usepackage{multicol}

\usepackage{amsthm,amsmath,amssymb}
\usepackage{booktabs}
\usepackage{graphicx}
\usepackage{subcaption}
\usepackage[linesnumbered,ruled,noend]{algorithm2e}
\usepackage{algorithmic}
\usepackage{svg}
\usepackage{xspace}

\usepackage{xcolor}

\usepackage{cite}
\usepackage{hyperref}
\usepackage{multirow}

\newtheorem{theorem}{Theorem}
\newtheorem{problem}{Problem}

\newtheorem{assumption}{Assumption}

\newtheorem{corollary}{Corollary}
\newtheorem{lemma}{Lemma}
\newtheorem{definition}{Definition}
\newtheorem{remark}{Remark}

\newcommand{\rv}[1]{{#1}}

\newcommand{\aoalg}{\textit{Kino-PAX$^{+}$}\xspace}
\newcommand{\alg}{\textit{Kino-PAX}\xspace}

\newcommand{\free}{\text{valid}}
\newcommand{\goal}{\text{goal}}
\newcommand{\init}{\text{init}}

\newcommand{\cost}{\mathrm{cost}}
\newcommand{\traj}{\mathbf{x}}
\newcommand{\Traj}{\mathbf{X}}

\newif\ifarxiv
\arxivtrue      

\begin{document}

\title{Kino-PAX$^{+}$: Near-Optimal Massively Parallel Kinodynamic Sampling-based Motion Planner
}

\author{Anonymous Authors}

\author{Nicolas Perrault$^{1}$, Qi Heng Ho$^{2}$, and Morteza Lahijanian$^{1}$%
\thanks{$^{1}$The authors are with the Department of Aerospace Engineering Sciences, University of Colorado Boulder, CO, USA. 
        {\tt\small \{nicolas.perrault, morteza.lahijanian\}@colorado.edu}}%
\thanks{$^{2}$Q. H. Ho is with the Department of Aerospace and Ocean Engineering, Virginia Tech, VA, USA. 
        {\tt\small qihengho@vt.edu}}%
}

\maketitle

\begin{abstract}
    Sampling-based motion planners (SBMPs) are widely used for robot motion planning with complex kinodynamic constraints in high-dimensional spaces, yet their serial computation design results in planning speeds that scale poorly with problem complexity. Recent efforts to parallelize SBMPs have achieved significant speedups in finding feasible solutions; however, they provide no guarantees of optimizing an objective function. We introduce Kino-PAX$^{+}$, a massively parallel kinodynamic SBMP with asymptotic near-optimal guarantees.  Kino-PAX$^{+}$ builds a sparse tree of dynamically feasible trajectories by decomposing traditionally serial operations into three massively parallel subroutines. The algorithm focuses computation on the most promising nodes within local neighborhoods for propagation and refinement, enabling rapid improvement of solution cost. We prove that, while maintaining probabilistic $\delta$-robust completeness, this focus on promising nodes ensures asymptotic $\delta$-robust near-optimality. Our results show that Kino-PAX$^{+}$ finds solutions up to three orders of magnitude faster than existing serial methods and achieves lower solution costs than a state-of-the-art GPU-based planner.
\end{abstract}

\begin{IEEEkeywords}
    Nonholonomic Motion Planning, Motion and Path Planning, Computer Architecture for Robotic and Automation.
\end{IEEEkeywords}

\input macros.tex
 \section{Introduction}
    \label{sec:intro}
    \input{sections/introduction}

\subsection{Related Work}
    \label{sec:related}
    \input{sections/related}


\section{Problem Formulation}
    \label{sec:problem}
    \input{sections/problem}

    
\section{\aoalg Algorithm}
    \label{sec:alg}

\input{sections/Algorithm}

\section{Analysis}
    \label{sec:Analyis}
    \input{sections/Analysis}



\section{Empirical Evaluation}
    \label{sec:OKPAX_experiments}
    \input{sections/experiments}

\section{Conclusion}
    \label{sec:conclusion}
    \input{sections/conclusion}

\appendix
\input{sections/GPU_implementation}

\bibliographystyle{IEEEtran}
\bibliography{refs,references}



\end{document}

%% file: macros.tex
\newcommand{\diff}[2]{\frac{\partial #1}{\partial #2}}
\newcommand{\diffr}[1]{\diff{#1}{r}}
\newcommand{\diffth}[1]{\diff{#1}{\theta}}
\newcommand{\diffz}[1]{\diff{#1}{z}}

\newcommand{\vth}{V_{\theta}}

\newcommand{\twochoices}[2]{\left\{ \begin{array}{lcc}
        \displaystyle #1 \\ \vspace{-10pt} \\
        \displaystyle #2 \end{array} \right. } 

\newcommand{\threechoices}[3]{\left\{ \begin{array}{lcc}
        #1 \\ #2 \\ #3 \end{array} \right. }    

\newcommand{\fourchoices}[4]{\left\{ \begin{array}{lcc}
        #1 \\ #2 \\ #3 \\ #4 \end{array} \right. }      

\newcommand{\twovec}[2]{\left(\begin{array}{c} #1 \\ #2 \end{array}\right)}
\newcommand{\threevec}[3]{\left(\begin{array}{c} #1 \\ #2 \\ #3 \end{array}\right)}
\newcommand{\twomatrix}[4]{\left(\begin{array}{cc} #1 & #2 \\ #3 & #4 \end{array}\right)}

%% file: sections/introduction.tex
\IEEEPARstart
{A}{utonomous} robotic systems deployed in dynamic environments require fast, reactive motion planning that accounts for complex kinematics and dynamics. 
However, feasibility alone is insufficient for high-performance operation; such systems demand \emph{high-quality} trajectories that minimize costs such as path length, energy consumption, execution time, or control effort.
Despite recent progress in accelerating fast kinodynamic motion planning via, e.g., parallelization, achieving \emph{fast optimal} planning remains a significant challenge.
In this work, we aim to enable low-latency, near-optimal motion planning for complex and high-dimensional kinodynamical systems by exploiting the parallel architecture of GPU-like devices.

\textit{Sampling-based motion planners} (SBMPs) have emerged as the dominant approach for these challenges, successfully handling complex dynamics 
\cite{lavalle2001randomized, phillips2004guided, plaku2010motion, ladd2005fast,  sucan2011sampling}, complex tasks \cite{bhatia2010sampling, Maly:HSCC:2013, Lahijanian:TRO:2016}, and stochastic environments 
\cite{luders2010chance, Luna:AAAI:2014, Luna:WAFR:2015, Ho:ICRA:2023}. These methods have been extended to optimize for a cost function, achieving asympotic near-optimality \cite{li2016asymptotically, Hauser2016aox, ho:2022:ICRA, karaman2011rrtstar}. Nevertheless, traditional SBMPs are typically designed for serial computation, limiting their performance to the clock speed of a single CPU core. While recent methods can find solutions within seconds for simple systems and tens of seconds for complex ones, these latencies remain a bottleneck for reactive deployment. With the stagnation of single-thread performance improvements, parallel architectures like GPUs offer the only viable path to substantial speedups. Yet, standard asymptotically optimal algorithms are inherently sequential, making them inefficient when naively parallelized. 

\begin{figure}[t]
    \centering
     \begin{subfigure}[b]{0.32\columnwidth}
        \centering
        \includegraphics[width=\textwidth]{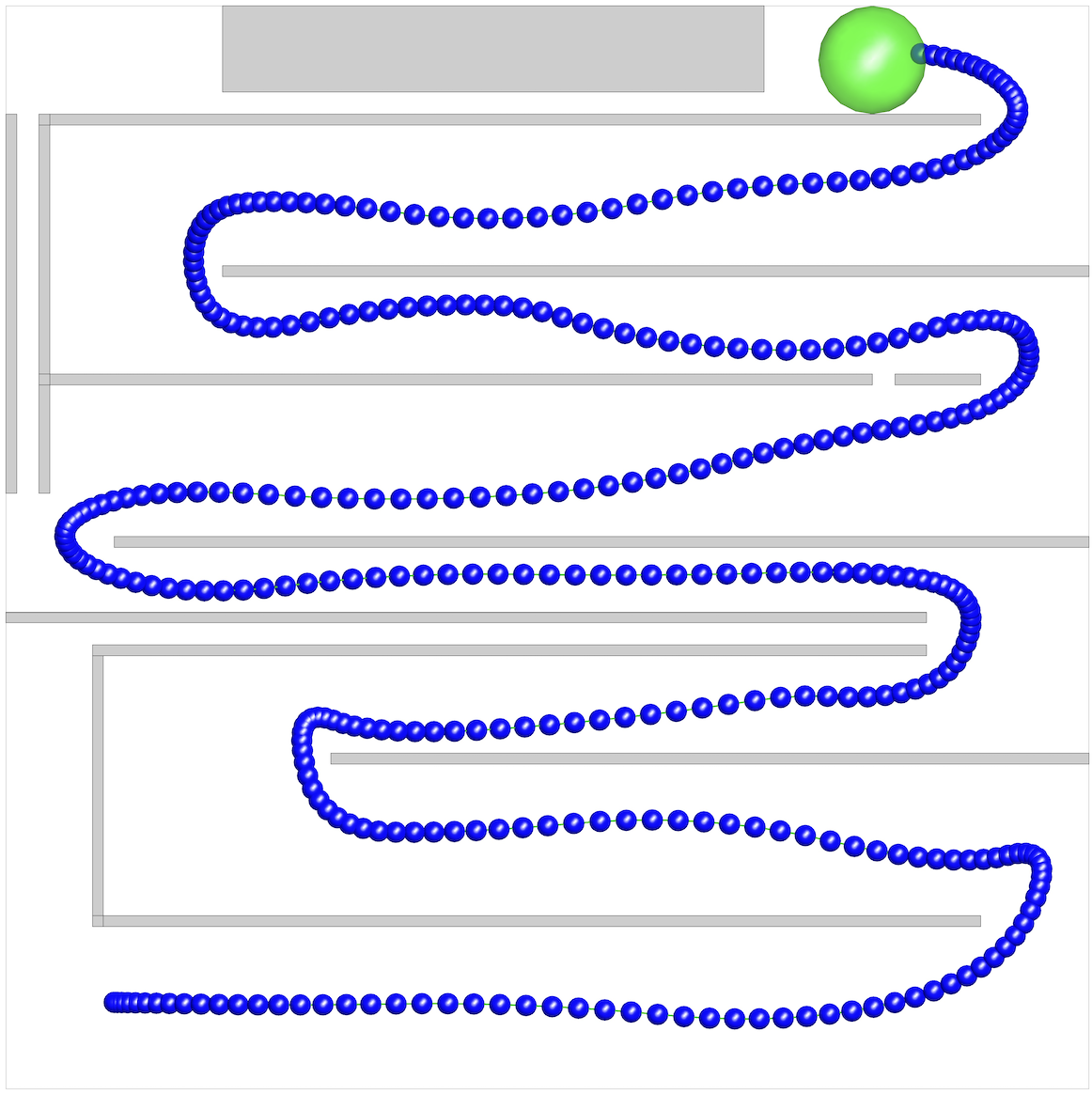}
        \caption{First Solution}
        \label{fig:AOKPAX_first_sol2}
    \end{subfigure}
    \begin{subfigure}[b]{0.32\columnwidth}
        \centering
        \includegraphics[width=\textwidth]{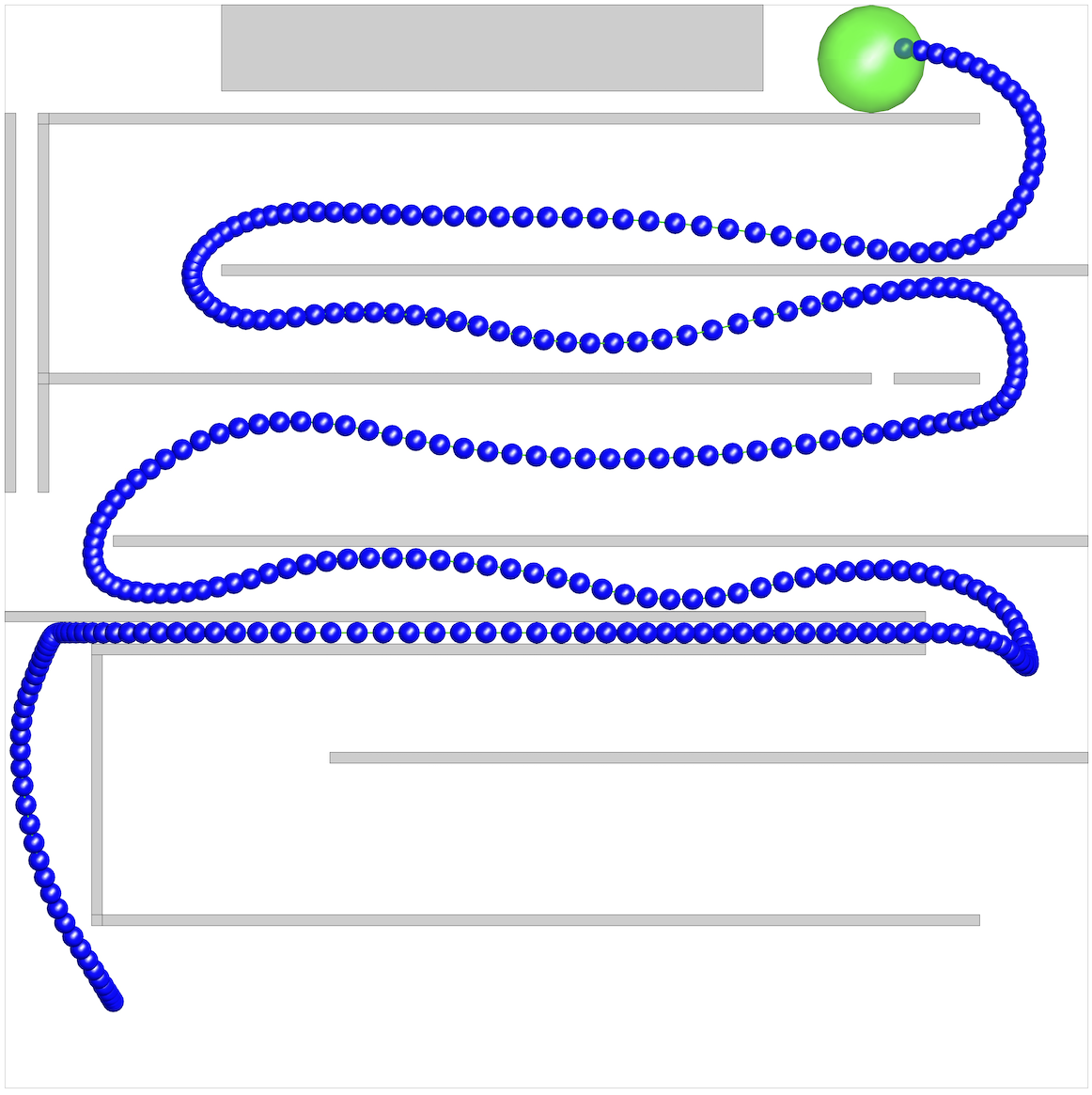}
        \caption{Intermediate Sol.}
        \label{fig:AOKPAX_intermediate_sol2}
    \end{subfigure}
    \begin{subfigure}[b]{0.32\columnwidth}
        \centering
        \includegraphics[width=\textwidth]{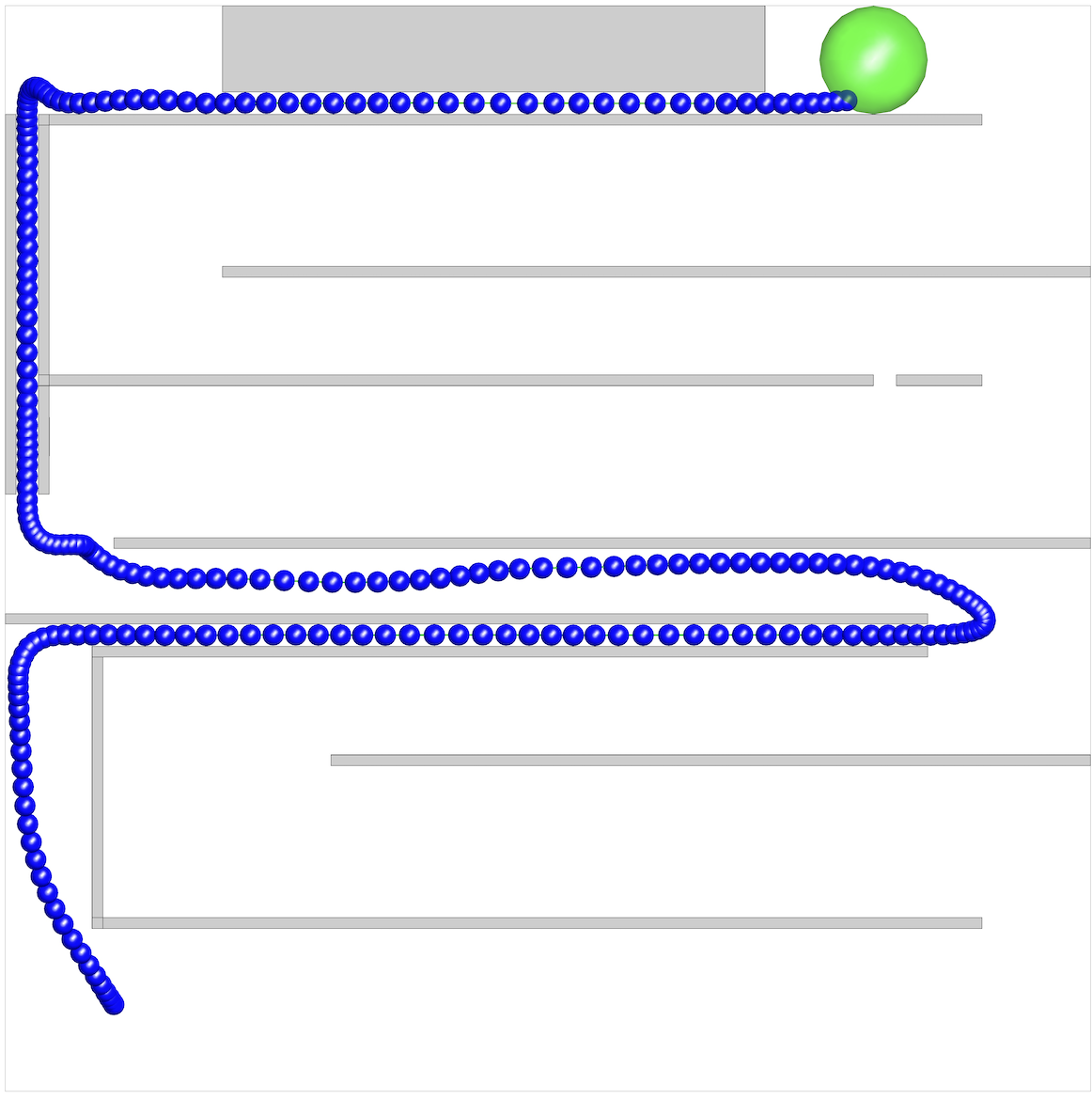}
        \caption{Final Solution}
        \label{fig:AOKPAX_final_sol2}
    \end{subfigure}
    \caption{Three solutions found by \aoalg using 6D Double Integrator dynamics and minimizing path length cost. 
    }
    \label{fig:AOKPAX_solutions2}
\end{figure}

On the other hand, parallel methodologies have been actively explored to overcome these serial limitations. CPU-based parallel approaches \cite{otte2013c, plaku2005sampling, sun2015high, ichnowski2012parallel, ichnowski2020concurrent, thomason2024motions} distribute the search workload across multiple cores to accelerate convergence. However, these methods are fundamentally constrained by the limited number of available CPU cores (typically tens) and the significant synchronization overhead required to maintain tree consistency, often yielding sub-linear speedups that are insufficient for high-rate replanning. Conversely, GPU-based algorithms such as \alg \cite{Perrault2025kinopax} utilize massive concurrency (thousands of threads) to quickly find solutions. While this yields millisecond-level planning times, it only finds feasible solutions without optimizing for a cost function. This reveals a fundamental gap: every existing parallel SBMP trades optimality guarantees for parallelism, while asymptotically near-optimal kinodynamic SBMP remains inherently serial. No prior work has demonstrated a massively parallel SBMP with asymptotic near-optimality guarantees for kinodynamic systems.


In this paper, we present \aoalg, a massively parallel
near-optimal kinodynamic SBMP designed for efficient
execution on modern many-core (GPU) processors. \aoalg
delivers \emph{low-cost} first solutions and guarantees
asymptotic \emph{near-optimality} over extended planning
durations (see Figure~\ref{fig:AOKPAX_solutions2}). The key
idea is to decompose the state space into fixed regions, each
tracking its own lowest-cost node via atomic updates, so that
expansion is concentrated on locally optimal trajectories.
This decomposition admits parallel execution: regions exist
independently of the search and can be updated by any thread
without serialization. Dominated nodes are pruned
concurrently, while dormant region-best nodes are periodically
reactivated to preserve completeness. We present a formal
analysis of \aoalg, proving its asymptotic near-optimality and
$\delta$-robust completeness. This is, to our knowledge, the
first such guarantee for a massively parallel kinodynamic
SBMP. Finally, we demonstrate through several benchmarks
that \aoalg is effective across a variety of motion planning
problems.

In short, our main contributions are: (i) the \emph{first} SBMP, to our knowledge, that simultaneously achieves massive (GPU-scale) parallelism and asymptotic near-optimality for kinodynamic systems, (ii) a novel analysis technique for parallel SBMPs, yielding the first proofs of probabilistic $\delta$-robust completeness and asymptotic $\delta$-robust near-optimality for a massively parallel kinodynamic planner, and (iii) several benchmarks and comparisons showing 
the efficiency and efficacy of \aoalg. 
 
Our results show that \aoalg achieves up to three orders-of-magnitude speedups over serial methods while attaining lower solution cost than both serial methods and \alg at comparable planning times. In 3D complex environments, \aoalg finds solutions in approximately $10$ ms for 6D systems and hundreds of milliseconds for 12D nonlinear systems, while continuously converging toward near-optimality with additional planning time.

%% file: sections/related.tex
\paragraph*{Geometric Motion Planning} Sampling-based motion planners (SBMPs) such as PRM~\cite{kavraki1996probabilistic} and RRT~\cite{lavalle2001rapidly} are foundational for geometric planning but are inherently serial. To improve planning rates, recent works have explored parallelization on both multi-core CPUs~\cite{otte2013c, plaku2005sampling, sun2015high, ichnowski2012parallel,ichnowski2020concurrent, thomason2024motions} and many-core GPUs~\cite{ichter2017group}. While CPU-based methods achieve modest speedups via fine-grained lock-free data structures, they are constrained by low core counts and high inter-thread communication costs. Conversely, GPU-based methods like \cite{ichter2017group} achieve orders-of-magnitude speedups by adapting CPU-based algorithms such as FMT*~\cite{janson2015fast}. However, these techniques rely on explicit boundary value problem solvers (steering functions) to connect states, restricting them strictly to geometric or simple linear problems and rendering them inapplicable to complex kinodynamic systems.

\paragraph*{Optimization-based Trajectory Optimization} A complementary line of work addresses kinodynamic planning through local trajectory optimization. Gradient-based methods ~\cite{schulman2014motion} optimize an initial trajectory
via sequential convex programming or covariant gradient
descent; sampling-based MPC methods~\cite{williams2017model} draw forward rollouts and
update controls via information-theoretic importance-weighted averaging. These
methods can produce smooth, locally optimal trajectories
rapidly, but they typically require an initial trajectory,
assume a fixed finite horizon, and lack global completeness
or optimality guarantees. In contrast, \aoalg performs
unbounded-horizon global planning from scratch for nonlinear
dynamics, providing asymptotic near-optimality by
construction.

\paragraph*{Kinodynamic Motion Planning \& Optimality}

For systems with complex differential constraints, kinodynamic planners such as RRT~\cite{lavalle2001randomized} and EST~\cite{hsu1997path} extend trees via forward propagation. To achieve \emph{asymptotic near-optimality}, algorithms like SST~\cite{li2016asymptotically} and AO-RRT~\cite{Hauser2016aox} introduce cost-based pruning and selection criteria. SST, in particular, maintains a sparse tree by keeping only the lowest-cost node within a local neighborhood and pruning dominated alternatives. \aoalg shares this principle of retaining locally best nodes, but differs in two fundamental ways. SST maintains a single global witness structure that is updated sequentially, making it inherently serial, whereas \aoalg partitions the state space into fixed regions whose costs are updated in parallel. Existing parallel kinodynamic approaches typically employ \emph{coarse-grained} parallelism (running multiple independent trees)~\cite{746692}, which improves average-case time-to-solution but does not accelerate the convergence rate of a single optimal plan.

Most relevant to our work is \alg~\cite{Perrault2025kinopax}, which demonstrates that massive GPU parallelism can solve the \emph{feasibility} problem for kinodynamic systems in milliseconds. \alg decomposes the standard SBMP loop into three parallel subroutines: it samples controls for every active node simultaneously, propagates them in parallel, and adds valid children to the tree in parallel. A fixed array-based tree representation and a spatial decomposition of the state space enable threads to operate independently, while an expected-tree-size hyperparameter bounds memory usage on the GPU. However, \alg treats all nodes equally during expansion and performs no cost comparison or pruning, so it finds feasible trajectories quickly but cannot improve their quality over time.

\aoalg retains \alg's subroutines (array-based tree, atomic propagation, expected-tree-size budgeting) but introduces a novel search strategy to enable cost refinement and asymptotic optimality. Specifically, \aoalg introduces three new mechanisms: (i) each region of the spatial decomposition now tracks its lowest-cost node, so that only locally best nodes are selected for further expansion, (ii) a node management scheme deactivates cost-dominated nodes and periodically reactivates them to preserve completeness, and (iii) a concurrent pruning phase removes suboptimal nodes without serializing the propagation phase. While \alg explores uniformly to find any feasible solution, \aoalg progressively concentrates computation on promising trajectories, and provides asymptotic near-optimality guarantees.

%% file: sections/problem.tex
Consider a robotic system in a bounded workspace $W \subset \mathbb{R}^d$, where $d \in \{2, 3\}$, with a finite set of obstacles $\mathcal{O}$.
The robot's motion is constrained to the dynamics
\begin{equation}
\label{eq:diffEq}
\dot{x}(t) = f(x(t), u(t)), 
\end{equation}
where 
$x(t) \in X \subset \mathbb{R}^n$ is the state, and $u(t) \in U \subset \mathbb{R}^N$ is the control at time $t$. The state space $X$ and control space $U$ are assumed to be compact. Function $f:X \times U \to \mathbb{R}^n$ is the vector field and assumed to be Lipschitz continuous with respect to both arguments, i.e, there exist constants \(K_x, K_u > 0\) such that for all \(x, x' \in X\) and \(u, u' \in U\),
$$\|f(x, u) - f(x', u')\| \leq   K_x \|x - x'\| + K_u \|u - u'\|.$$
In addition, we assume that the system dynamics satisfy Chow’s condition~\cite{Chow1940/1941}, which implies that the system is small-time locally accessible.

Beyond the differential constraints in~\eqref{eq:diffEq} and obstacles in $\mathcal{O}$, the robot must satisfy state constraints (e.g., velocity limits).
We define the \textit{valid} state space $\rv{X_{\free}} \subseteq X$ as the set of states that are collision-free and satisfy all state constraints. 

Given an initial state $x_{\init} \in \rv{X_{\free}}$, a time horizon \(t_{f} \geq 0\), and a control trajectory $\mathbf{u}: [0,t_{f}] \to U$, 
a unique \textit{state trajectory} $\traj: [0,t_{f}] \to X$ is obtained such that
\begin{align}
    \traj(t) = x_{\init} + \int_{0}^{t} f(\traj(\tau),\mathbf{u}(\tau)) d \tau \qquad \forall t \in [0,t_f].
    \label{eq:integral}
\end{align}
Trajectory $\traj$ is \emph{valid} if, $ \forall t \in [0,t_f]$, $\traj(t) \in \rv{X_{\free}}$. The objective of kinodynamic motion planning is to find a valid trajectory $\traj$ that reaches a given goal set $X_\goal \subseteq \rv{X_{\free}}$. 


In addition to feasibility, we seek to optimize the cost of the trajectory.
Let $\Traj$ denote the set of trajectories with finite durations.  The cost of a trajectory is defined by a function $\cost: \Traj \to \mathbb{R}_{\geq 0}$ that assigns to each trajectory $\traj \in \Traj$ a non-negative value $\cost(\traj) \in \mathbb{R}_{\geq 0}$.
We assume that $\cost$ is a smooth, continuous function satisfying the additivity, monotonicity, and non-degeneracy properties \cite{li2016asymptotically}, as formally stated below.

\begin{assumption}[\!{\cite{li2016asymptotically}}]
    \label{assumption: cost}
    The cost function $\cost$ is Lipschitz continuous, i.e, there exists constant $K_c > 0$ such that
\[
|\cost(\traj) - \cost(\traj')| \leq K_c \ \sup_{t \in [0,t_f]} \|\traj(t) - \traj'(t)\|
\]
for all $\traj,\traj'\in \Traj$ 
with the same start state, i.e., $\traj(0) = \traj'(0)$. Furthermore, consider a trajectory $\traj \in \Traj$ with duration $t_f > 0$, and 
for $t \in [0, t_f]$,
define $\traj^{t}$ and $\traj_{t}$ as its \emph{prefix} up to time $t$ and \emph{suffix} from time $t$, respectively, so that their concatenation $\traj^{t} \bullet \:\: \traj_{t} = \traj$.  
Then, it holds that, for all $t \in [0,t_f]$:
\begin{align*}
    &\text{Additivity:} && \cost(\traj) = \cost(\traj^t) + \cost(\traj_t) \\
    &\text{Monotonicity:} && \cost(\traj^t) \leq \cost(\traj)\\
    &\text{Non-degeneracy:} && \exists K_c > 0 \text{ s.t. } \\
    & && \qquad t_f - t \leq K_c \, |\cost(\traj) - \cost(\traj^t)|.
\end{align*}
\end{assumption}

Ideally, one seeks a valid trajectory $\traj$ that reaches the goal set $X_{\goal}$ while minimizing $\cost(\traj)$. However, computing such an optimal trajectory is notoriously difficult. Instead, we aim to compute a near-optimal solution \emph{very quickly}.

\begin{definition}[Near-optimal trajectory]
    \label{def:near optimal}
    Let $\Traj_\text{sol} \subseteq \Traj$ be the set of solution trajectories to a given motion planning problem, i.e., 
    $\Traj_\text{sol} = \{\traj \in \Traj \mid \traj(t) \in X_\free \;\;\; \forall t \in [0,t_f], \;\;  \traj(t_f) \in X_\goal
    \},$
    and
    denote by $c^*$ the minimum cost over all solution trajectories, i.e.,
    $c^* = \min_{\traj \in \Traj_\text{sol}} \cost(\traj).$
    A trajectory $\traj^+ \in \Traj_\text{sol}$ is called a \emph{near-optimal} solution if, for a $\beta > 0$,
    \begin{align*}
         \cost(\traj^+) \leq (1+\beta) c^*.
    \end{align*}
\end{definition}

\begin{problem}[Near-Optimal Kinodynamic Motion Planning]
    \label{problem}  
    Consider a robot with dynamics in \eqref{eq:diffEq} operating in a workspace $W$ with obstacle set $\mathcal{O}$. Given an initial state $x_{\init} \in X_{\free} \subseteq X$, a goal region $X_{\goal} \subseteq X_{\free}$, and a trajectory cost function $\cost$ satisfying Assumption~\ref{assumption: cost},  \emph{efficiently} compute a control trajectory $\mathbf{u} : [0,t_f] \to U$ whose induced state trajectory is a near-optimal solution as defined in Definition~\ref{def:near optimal}.
\end{problem}

While Problem~\ref{problem} is a standard near-optimal kinodynamic motion planning problem addressed by algorithms such as SST~\cite{li2016asymptotically}, it remains computationally challenging for at the millisecond-to-second timescales required for responsive deployment.
Recent GPU-based parallel solvers \cite{Perrault2025kinopax} have enabled
finding feasible kinodynamic trajectories tractable; however, the added complexity of optimizing for a cost function and ensuring near-optimality changes the algorithmic requirements. In this work, we focus on designing a highly efficient, GPU-based algorithm that provides asymptotic near-optimality guarantees.

%% file: sections/Algorithm.tex
To solve Problem~\ref{problem}, we propose \emph{Near Optimal Kinodynamic Parallel Accelerated eXpansion} (\aoalg), a highly parallel kinodynamic SBMP designed to utilize the throughput of many-core (GPU) architectures while providing asymptotic near-optimality properties. \aoalg builds a sparse trajectory tree by decomposing the SBMP iterative operations (node selection, node extension, and node pruning) into three massively parallelized subroutines. Each subroutine is optimized for efficient execution on highly parallel processors and reduce communication overhead such as CPU-GPU interactions.

During each iteration of \aoalg, multiple nodes from the existing tree are extended concurrently via random control sampling. This extension is governed by a spatial decomposition $\mathcal{R}$ to guide the search process and systematically optimize for low cost solutions. Additionally, this decomposition promotes thread independence during node addition and selection phases. After extensions are completed, non-promising nodes are pruned in parallel, and a new set of promising nodes with low cost is selected concurrently for expansion in the subsequent iteration. By continuously tracking the lowest-cost node per region, \aoalg progressively refines cost estimates, selecting only promising nodes for further propagation, and converges towards near-optimal solutions.

\aoalg inherits from \alg~\cite{Perrault2025kinopax} the decomposition-based search representation and parallel tree propagation that together enable thread-independent execution. The novel components introduced in \aoalg are: (i) a cost-aware spatial decomposition that guides expansion toward
locally optimal nodes, replacing \alg's uniform exploration, (ii) cost-aware node management that focuses computation
on promising regions of the tree while preserving
completeness, and (iii) a concurrent dominance-based pruning mechanism that improves solution cost without serializing the parallel propagation phase. Together, these enable \aoalg to achieve
asymptotic near-optimality and probabilistic
$\delta$-robust completeness.

Below, we present the core algorithm and its subroutines, and, in Appendix~\ref{appendix: hyperparameters}, we provide low-level hyper-parameter tuning discussion and implementation details.

\subsection{Core Algorithm}

\begin{figure*}[ht!]
\label{AOKPAX_iteration}
    \centering
    \begin{subfigure}[t]{0.04\textwidth}
        \centering
        \vspace{-39mm}
        \includegraphics[width=\textwidth]{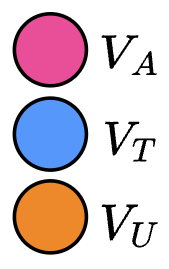}
    \end{subfigure}
    \hspace{-3mm}
    ~
    \begin{subfigure}[b]{0.18\textwidth}        \centering
        \includegraphics[width=\textwidth]{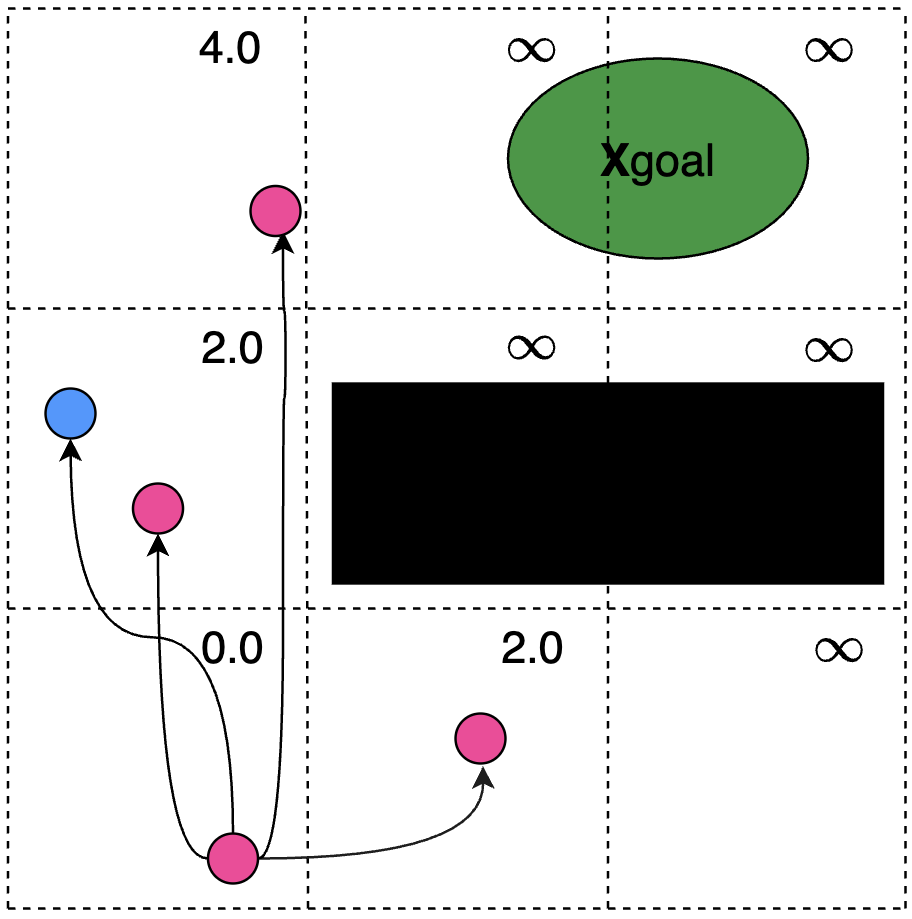}
         \caption{}
        \label{fig:AOKOAX_iteration1}
    \end{subfigure}
    \hspace{-3mm}
    ~
    \begin{subfigure}[b]{0.18\textwidth}
        \centering
        \includegraphics[width=\textwidth]{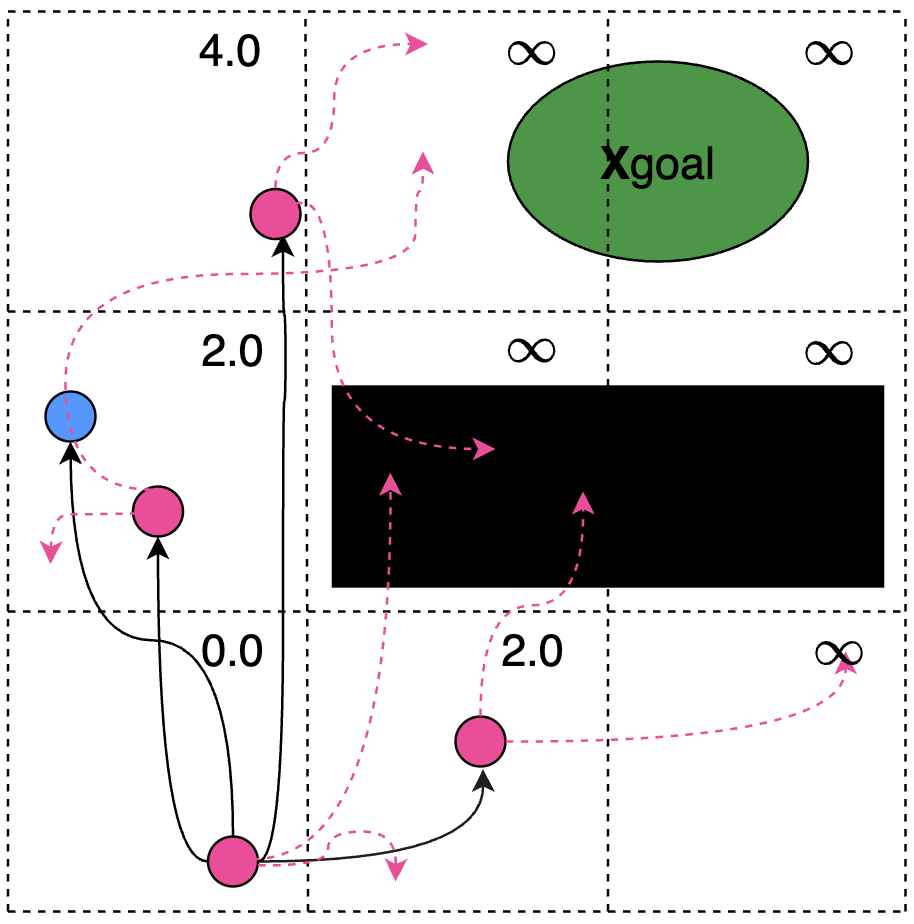}
        \caption{}
        \label{fig:AOKOAX_iteration2}
    \end{subfigure}
    \hspace{-3mm}
    ~
    \begin{subfigure}[b]{0.18\textwidth}
        \centering
        \includegraphics[width=\textwidth]{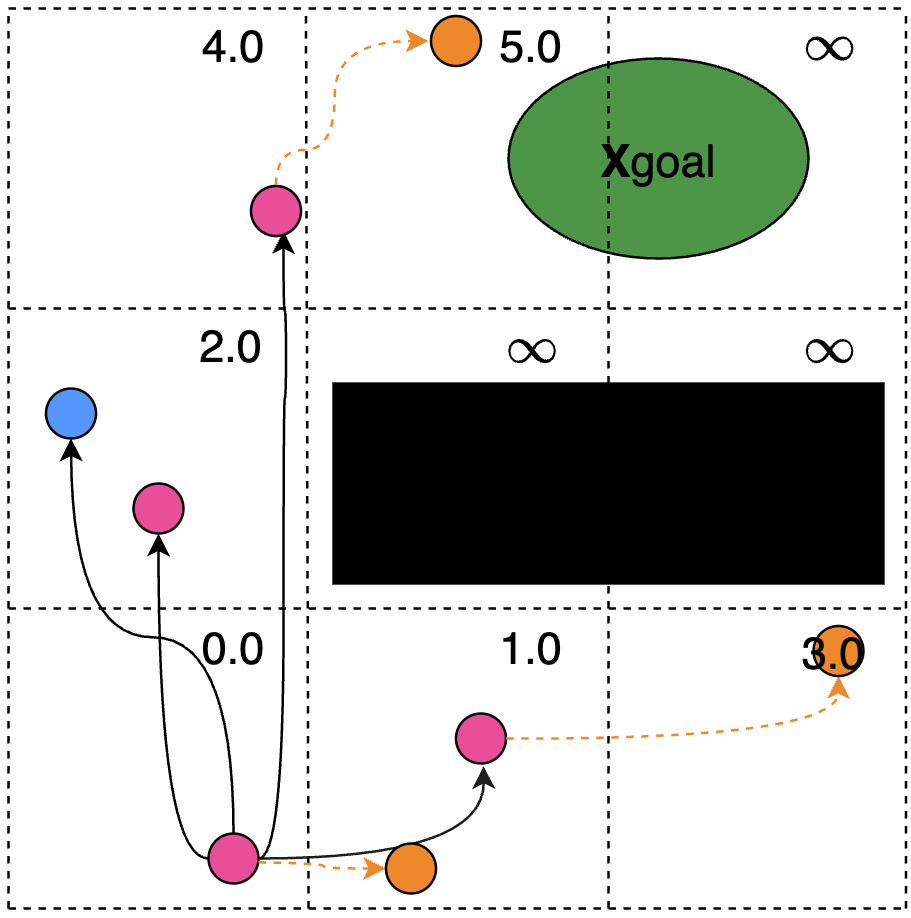}
        \caption{}
        \label{fig:AOKOAX_iteration3}
    \end{subfigure}
    \hspace{-3mm}
    ~
    \begin{subfigure}[b]{0.18\textwidth}
        \centering
        \includegraphics[width=\textwidth]{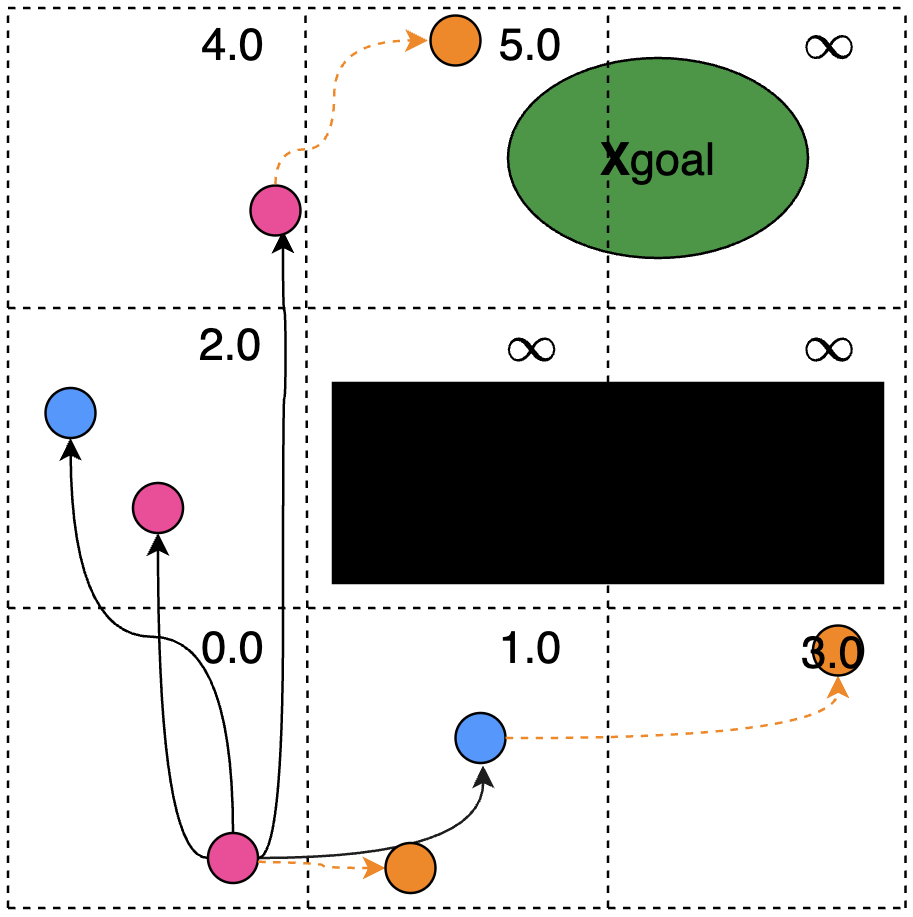}
        \caption{}
        \label{fig:AOKOAX_iteration4}
    \end{subfigure}
    \hspace{-3mm}
    ~
    \begin{subfigure}[b]{0.18\textwidth}
        \centering
        \includegraphics[width=\textwidth]{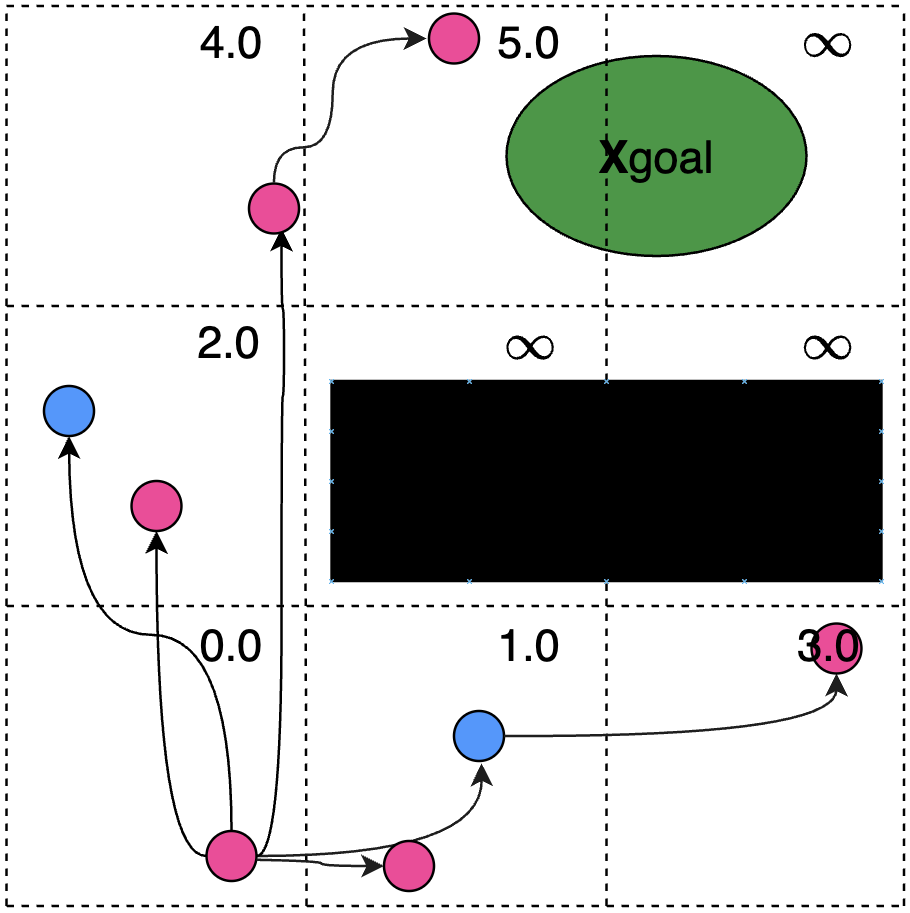}
        \caption{}
        \label{fig:AOKOAX_iteration5}
    \end{subfigure}
    \hspace{-3mm}
    \caption{ \rv{ Illustration of the \aoalg expansion process:  
(a) Current node sets \( V_A \) and \( V_{T} \). Numbers within grid cells indicate the lowest-cost trajectories reaching each region \( \mathcal{R}_i \).
(b) Parallel expansion of nodes in \( V_A \) with branching factor \( \lambda = 2 \).
(c) Addition of selected nodes to the unexplored set \( V_U \) and corresponding updates to region costs.
(d) Pruning of nodes from \( V_A \) based on newly acquired trajectory cost information.
(e) Updated active set \( V_A \), ready for the next iteration of node expansion.
}}
    \label{fig:AOKPAX_overview}
\end{figure*}

Here, we provide a detailed description of \aoalg with pseudocode in Algorithms~\ref{algo:ao_alg}-\ref{algo:ao_TreeUpdate}
as well as an illustration of one cycle of \aoalg in Figure~\ref{fig:AOKPAX_overview}. \aoalg organizes tree nodes into four distinct sets: $V_A$, $V_I$, $V_T$, and $V_U$. Specifically, $V_A$ contains nodes selected for parallel expansion (Algorithm~\ref{algo:ao_Propagate}); $V_I$ includes temporarily inactive nodes that may be reactivated for expansion in future iterations (Algorithm~\ref{algo:ao_NodePruning}); $V_T$ comprises terminal nodes permanently pruned from the tree, and thus not considered in subsequent iterations (Algorithm~\ref{algo:ao_NodePruning}); and finally, $V_U$ consists of newly generated promising nodes produced during node propagation (Algorithm~\ref{algo:ao_Propagate}). Additionally, \aoalg partitions the state space into distinct hyper-cube regions with diagonal length $\delta$, denoted by $\mathcal{R}$, and maintains the lowest-cost node within each region to guide efficient exploration.

\subsubsection{Initialization}

\label{sec:OKPAX_initialization}

In Algorithm~\ref{algo:ao_alg}, \texttt{\aoalg} takes as input an initial state $x_{\text{init}} \in X_\free$, a goal region $X_{\text{goal}} \subseteq X_\free$, a maximum execution time $t_{\text{max}}$, a branching factor $\lambda$, and a maximum inactive node count $I_{\text{max}}$. In Lines 1-2, the initial state $x_{\text{init}}$ is set as the root node of the tree $\mathcal{T}$ and placed into the active set $V_A$. The inactive set $V_I$, terminal set $V_T$, and unexplored set $V_U$ are initialized as empty. Lines 3-5 initialize each region in the spatial decomposition $\mathcal{R}$ with infinite cost, while the solution trajectory $\traj$ is initialized as \texttt{null} with infinite cost.

\begin{algorithm}[t]
    \caption{\texttt{\aoalg}}
    \label{algo:ao_alg}
    \SetKwInOut{Input}{Input}\SetKwInOut{Output}{Output}
    \Input{Initial state $x_{\init}$, goal region $X_{\goal}$, planning time $t_{max}$, parallel expansion $\lambda$, decomposition diagonal $\delta$, inactivity threshold $I_{max}$}
    \Output{Near-optimal solution trajectory $\traj$}
    
    \SetKwFunction{Propagate}{Propagate}
    \SetKwFunction{PruneNodes}{PruneNodes}
    \SetKwFunction{UpdateTree}{UpdateTree}

    $\mathcal{T} \gets$ Initialize tree with root node $x_{\init}$\\
    $V_A \gets \{x_{\init}\}$, $V_{I}, V_{U}, V_T \gets \emptyset$ \\
    Construct decomposition $\mathcal{R}=\{ \mathcal{R}_i \}_{i=1}^{N}$ with diameter $\delta$\\
    $\mathcal{R}=\{\mathcal{R}_i\}_i^{N}$ with $cost(\mathcal{R}_i) = \infty$\\
    Initialize $\traj = null$ with $\cost(\traj) = \infty$\\
    \While{$ElapsedTime < t_{max}$}{
        \Propagate{$V_A, V_U, \mathcal{R}, \lambda$} \\
        \PruneNodes{$\mathcal{T}, V_A, V_I, V_T, \mathcal{R}, I_{count}, I_{max}$} \\
        \UpdateTree{$\mathcal{T}, V_A, V_U, \mathcal{R}$} \\
    }
    \KwRet{$\traj$}
\end{algorithm}

\subsubsection{Node Extension}

After initialization, the main loop of \aoalg begins (Algorithm~\ref{algo:ao_alg}, Lines~5-8). In each iteration, the \texttt{Propagate} subroutine (Algorithm~\ref{algo:ao_Propagate}, Figures~\ref{fig:AOKOAX_iteration1} and \ref{fig:AOKOAX_iteration2}) propagates nodes in the active set $V_A$ using massive parallelism. The inputs to \texttt{Propagate} include $V_A$, $V_U$, $\mathcal{R}$, and parameter $\lambda \in \mathbb{N}^+$. Specifically, each node $x \in V_A$ undergoes $\lambda$ parallel expansions, resulting in a total of $\lambda \cdot |V_A|$ parallel threads, where $|V_A|$ denotes the cardinality of $V_A$. For each thread, a random control $u \in U$ and a duration $dt \in (0, T_{\text{prop}}]$ are sampled, with $T_{\text{prop}} > 0$ being a user-defined maximum propagation time. The node's state $x$ is then propagated using the system dynamics in \eqref{eq:diffEq}, generating a new candidate state $x'$ (Algorithm~\ref{algo:ao_Propagate}, Lines~3-4, Figure~\ref{fig:AOKOAX_iteration2}).

Subsequently, the validity of the trajectory segment between states $x$ and $x'$, denoted $\overline{xx'}$, is evaluated against $X_{\text{valid}}$. If the trajectory segment is valid, the spatial region $\mathcal{R}_i$ corresponding to the new state $x'$ is identified (Algorithm~\ref{algo:ao_Propagate}, Line~6). The incremental cost from $x$ to $x'$, i.e., $\cost(\overline{xx'})$, is computed, and the cumulative cost from the root $x_{\init}$ to $x'$ is updated accordingly (Algorithm~\ref{algo:ao_Propagate}, Line~7).

If the new state $x'$ improves upon the previously recorded cost for region $\mathcal{R}_i$, the cost associated with that region is updated (Algorithm~\ref{algo:ao_Propagate}, Line~8). Furthermore, if $x'$ remains the lowest recorded cost for region $\mathcal{R}_i$, it is added to the unexplored node set $V_U$ (Algorithm~\ref{algo:ao_Propagate}, Lines~9-10, Figure~\ref{fig:AOKOAX_iteration3}). This selective admission criterion ensures that nodes added to $V_U$ consistently improve upon the best trajectories found, guiding the search toward near-optimal solutions while maintaining memory efficiency.


\begin{remark}
Since the \texttt{Propagate} subroutine executes as a parallel process, a thread may insert a node into the unexplored set $V_U$ even if another thread identifies a lower-cost trajectory to the same region. The higher-cost nodes introduced initially are removed during the \texttt{UpdateTree} subroutine, ensuring only the lowest-cost nodes remain in the tree $\mathcal{T}$.
\end{remark}

\begin{algorithm}[t]
    \caption{\texttt{Propagate}}
    \label{algo:ao_Propagate}
    \SetKwInOut{Input}{Input}\SetKwInOut{Output}{Output}
    \Input{$V_A, V_U, \mathcal{R}, \lambda$}
    \Output{Updated $V_U$}
    \ForEach{$x \in V_A$ }{
        \For{$i = 1, \dots, \lambda$}{
            \rv{Randomly}
            sample $u$ and $dt$ \\
            $x' \gets \rv{\texttt{PropagateODE}}(x, u, dt)$ \\ 
            \If{the trajectory from $x$ to $x'$ is valid}{
                Map $x'$ to region $\mathcal{R}_i$ \\
                $\cost(\overline{x_{\init}x'}) \gets \cost(\overline{x_{\init}x}) + \cost(\overline{xx'})$ \\
                $cost(R_i) \gets \min \{cost(R_i), \cost(\overline{x_{\init}x'}) \}$ \\
               \If{$\cost(\overline{x_{\init}x'}) = cost(R_i)$}{
                    Add $x'$ to $V_{U}$ 
                }
            }
        }
    }
\end{algorithm}

\subsubsection{Node Selection}

After propagating nodes in $V_A$, the algorithm executes the \texttt{PruneNodes} subroutine (Algorithm~\ref{algo:ao_alg}, Line~7; Algorithm~\ref{algo:ao_NodePruning}), which classifies each node \( x \in \mathcal{T} \) as active, inactive, or terminal based on newly acquired search information. This subroutine operates as a massively parallel procedure, assigning one thread per node.

\begin{algorithm}[t]
    \caption{\texttt{PruneNodes}}
    \label{algo:ao_NodePruning}
    \SetKwInOut{Input}{Input}
    \SetKwInOut{Output}{Output}
    \Input{$\mathcal{T}, V_A, V_I, V_T, \mathcal{R}, I_{count}, I_{max}$}
    \Output{Updated $V_A, V_I, V_T, I_{count}$}
       
        \ForEach{$x \in \mathcal{T}$}{
            Map $x$ to region $\mathcal{R}_i$ \\
            \If{$\cost(\overline{x_{\init}x}) = cost(R_i)$ and $x \in V_I$}{
                Increment $I_{count}(x)$\\
                \If{$I_{count}(x) > I_{max}$}{
                    Add $x$ to $V_A$\\  
                    \KwRet{}\\
                }
            }
            \If{$I_{count}(x) > I_{max}$ and $\cost(\overline{x_{\init}x}) = cost(R_i)$}
            {
                \KwRet{}\\
            }
            \If{$\cost(\overline{x_{\init}x}) > cost(R_i)$}
            {
                Add $x$ to $V_T$\\  
                \KwRet{}\\
            }
            \If{$\exists$ ancestor $x_p$ of $x$, $\cost(\overline{x_{init}x_p}) > cost(R_p)$}
            {
                Add $x$ to $V_I$\\  
                \KwRet{}
            }
        }

\end{algorithm}

Each thread begins by mapping its node \( x \) to its corresponding region \( \mathcal{R}_i \) (Algorithm~\ref{algo:ao_NodePruning}, Line~2). If a node \( x \in V_I \) (currently inactive) no longer represents the lowest cost in its region, it is moved permanently to the terminal set \( V_T \) (Algorithm~\ref{algo:ao_NodePruning}, Lines~10-12). Conversely, if \( x \in V_I \) remains the lowest-cost node in its region, its inactivity counter \( I_{\text{count}}(x) \) is incremented (Algorithm~\ref{algo:ao_NodePruning}, Lines~3-4). If this counter exceeds a user-defined threshold \( I_{\text{max}} \), indicating prolonged inactivity without improvement in its region, node \( x \) is reactivated and returned to the set \( V_A \) for further expansion (Algorithm~\ref{algo:ao_NodePruning}, Lines~5-7). This mechanism ensures essential nodes re-enter active exploration, maintaining completeness.

Nodes currently in \( V_A \) are also evaluated for pruning. A node \( x \in V_A \) is pruned and moved to \( V_T \) if its trajectory cost exceeds the lowest recorded cost for its region (Algorithm~\ref{algo:ao_NodePruning}, Lines~10-12). Alternatively, if node \( x \) has the lowest cost within its region but any of its ancestor nodes exceeds the minimum recorded cost for their respective regions, node \( x \) is moved to the set \( V_I \) (Algorithm~\ref{algo:ao_NodePruning}, Lines~13-15, Figure~\ref{fig:AOKOAX_iteration4}). This selective pruning strategy enables \aoalg to concentrate computational resources on expanding a focused subset of promising nodes with a higher branching factor.

After pruning, the \texttt{UpdateTree} subroutine is invoked (Algorithm~\ref{algo:ao_alg}, Line~8; Algorithm~\ref{algo:ao_TreeUpdate}). This subroutine concurrently removes nodes in \( V_U \) that are not the minimum-cost nodes in their respective regions, adds the remaining nodes in \( V_U \) to \( \mathcal{T} \), and checks whether a better solution has been found. \texttt{UpdateTree} receives as input \( \mathcal{T} \), \( V_A \), \( V_U \), and \( \mathcal{R} \), with each thread responsible for processing a single node from \( V_U \). Each thread retrieves the corresponding region \( \mathcal{R}_i \) for its node \( x \) (Algorithm~\ref{algo:ao_TreeUpdate}, Line~2). If \( x \) represents the lowest-cost node to reach region \( \mathcal{R}_i \), it is added to both \( V_A \) and \( \mathcal{T} \) (Algorithm~\ref{algo:ao_TreeUpdate}, Lines~3-4, Figure~\ref{fig:AOKOAX_iteration5}). Subsequently, if node \( x \) satisfies the goal criteria and its cost improves upon the current best-known solution, the solution trajectory \( \traj \) and its associated cost are updated accordingly (Algorithm~\ref{algo:ao_TreeUpdate}, Lines~5-7).

\aoalg repeats the main loop of \texttt{Propagate}, \texttt{PruneNodes}, and \texttt{UpdateTree} until the user-defined time limit \( t_{\text{max}} \) is exceeded, at which point the best-found solution trajectory \( \traj \) is returned. Figures~\ref{fig:AOKPAX_solutions2} and \ref{fig:AOKPAX_solutions} illustrate how \aoalg progressively improves the quality of its solution over time.
\begin{figure}[h]
    \centering
     \begin{subfigure}[b]{0.32\columnwidth}
        \centering
        \includegraphics[width=\textwidth]{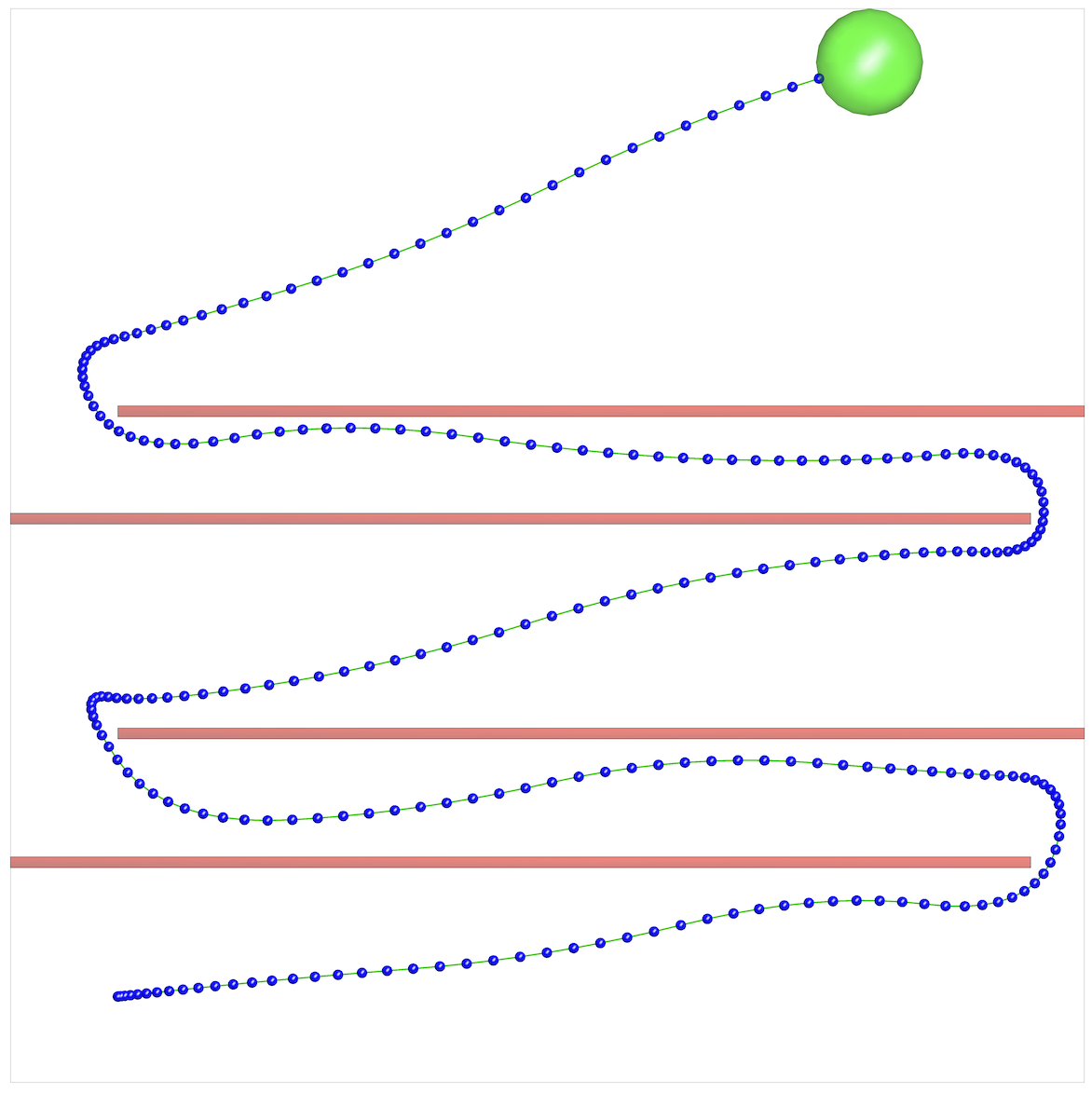}
        \caption{First Solution}
        \label{fig:AOKPAX_first_sol}
    \end{subfigure}
    \begin{subfigure}[b]{0.32\columnwidth}
        \centering
        \includegraphics[width=\textwidth]{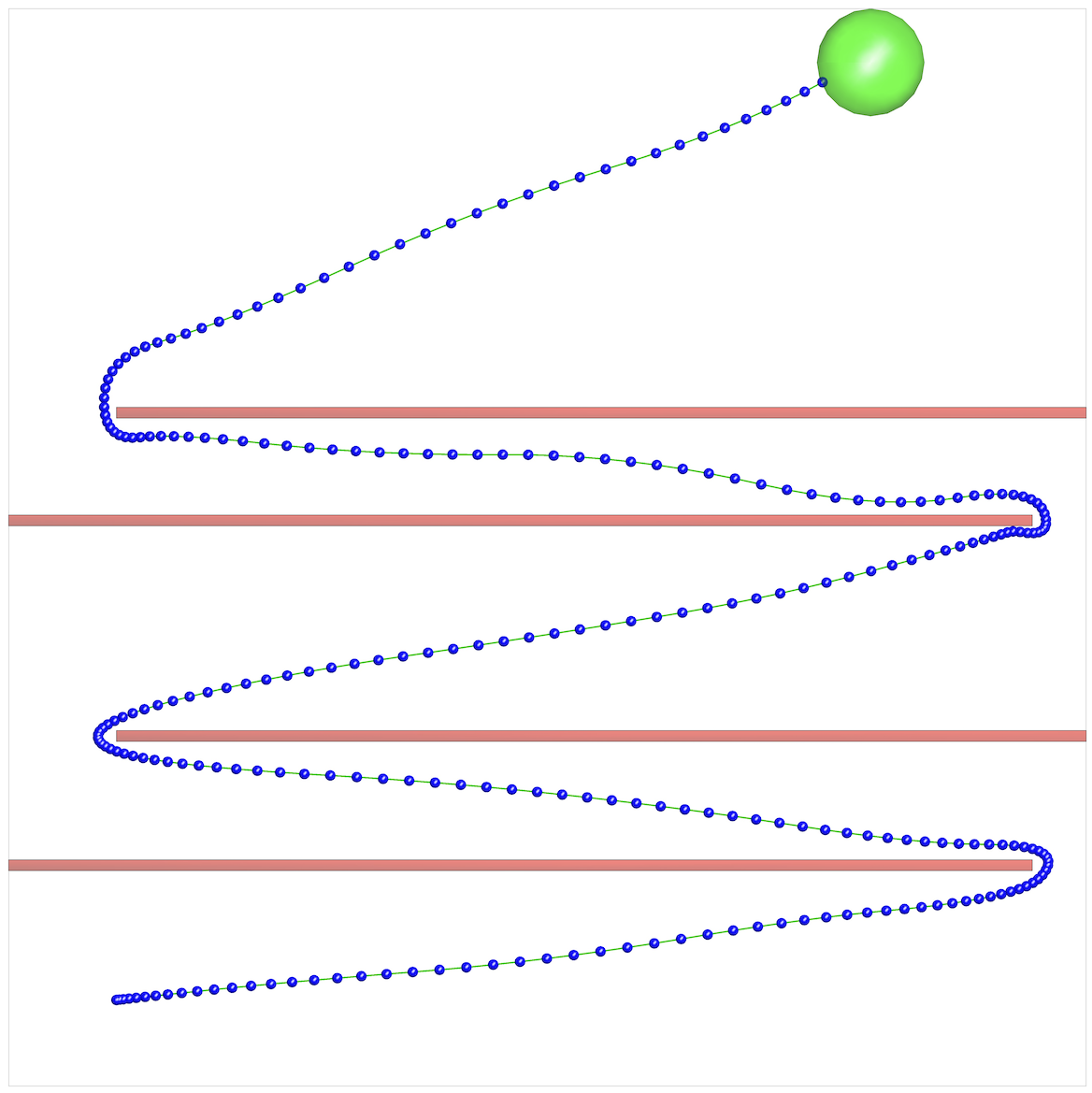}
        \caption{Intermediate}
        \label{fig:AOKPAX_intermediate_sol}
    \end{subfigure}
    \begin{subfigure}[b]{0.32\columnwidth}
        \centering
        \includegraphics[width=\textwidth]{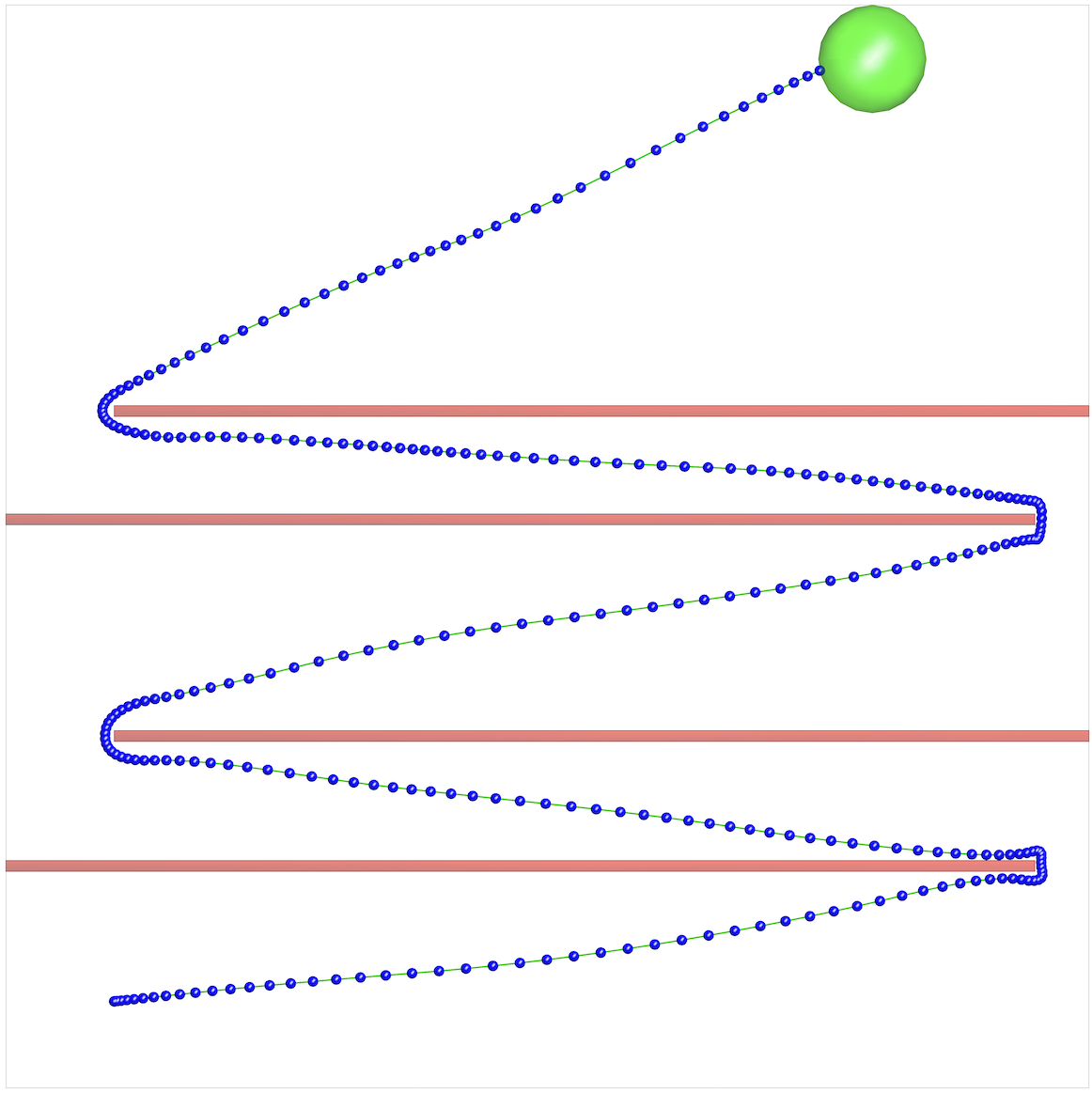}
        \caption{Final Solution}
        \label{fig:AOKPAX_final_sol}
    \end{subfigure}
    \caption{Three solutions found by \aoalg using 6D Double Integrator dynamics with improved cost over time.
    Subfigure (a) is the first solution, (b) is an intermediate solution, and (c) is the final solution.
    }
    \label{fig:AOKPAX_solutions}
\end{figure}



\begin{algorithm}[t]
    \caption{\texttt{UpdateTree}}
    \label{algo:ao_TreeUpdate}
    \SetKwInOut{Input}{Input}
    \SetKwInOut{Output}{Output}
    \Input{$\mathcal{T}, V_A, V_U, \mathcal{R}$}
    \Output{Updated $V_A, \mathcal{T}, \traj$}
         \ForEach{$x \in \mathcal{V_U}$}{
            Map $x$ to region $\mathcal{R}_i$ \\
            \If{$\cost(\overline{x_{\init}x}) = cost(R_i)$}{
                Add $x$ to $V_A$ and $\mathcal{T}$\\ 
                \If{$x \in X_{\goal}$ and $\cost(\overline{x_{\init}x}) < \cost(\traj)$}{
                    $\traj \gets$ $(\overline{x_{\init}x})$\\
                    $\cost(\traj) \gets \cost(\overline{x_{\init}x'})$
                }
            }
        }

\end{algorithm}

%% file: sections/Analysis.tex
In this section, we establish that \aoalg achieves probabilistic \(\delta\)-robust completeness (Definition \ref{def:delta-robust-completeness}) and asymptotic \(\delta\)-robust near-optimality (Definition \ref{def:asymp_delta_near_opt}). Our analysis contributes a proof technique that reconciles
the covering-ball arguments used to analyze serial
near-optimal planners with the region structure of \aoalg. We begin by defining required notions. 

The \emph{obstacle clearance} of a valid trajectory $\traj$
is the minimum distance from $\traj$ to the invalid set $X_\text{invalid} = X \setminus X_\free$. The \emph{dynamic clearance} of $\traj$
is the maximum distance $\delta_a$ you can displace the start and end points of $\traj$ such that a new, similar (within $\delta_a$ distance from $\traj$) trajectory is feasible according to the dynamics in \eqref{eq:diffEq} 
(see Definition~4 and Lemma~6 in \cite{li2016asymptotically} for details).  
A trajectory is called \textit{$\delta$-robust} if both of its obstacle and dynamic clearances are greater than $\delta$.

\begin{definition}[Probabilistic $\delta$-Robust Completeness]
\label{def:delta-robust-completeness}
Let $\Traj_n^\textsc{alg}$ denote the set of trajectories discovered by an algorithm \textsc{alg} at iteration $n$. Algorithm \textsc{alg} is probabilistically $\delta$-robustly complete, if for every motion planning problem where there exists at least one $\delta$-robust trajectory, the following holds for all independent runs:
\begin{equation*}
    \liminf_{n \to \infty} \;\;\; \mathbb{P}\left( \exists \traj \in \Traj_n^{\textsc{alg}} \text{ s.t. } \traj \in \Traj_\text{sol} \right) = 1.
\end{equation*}
\end{definition}

\begin{definition}[Asymptotic $\delta$-robust Near-Optimality]
    \label{def:asymp_delta_near_opt}

    Consider the setting in Problem~\ref{problem} where there exists at least one $\delta$-robust trajectory. Let $c^*$ denote the minimum achievable cost over all $\delta$-robust solution trajectories. Let $Y_n^{\textsc{alg}}$ denote a random variable representing the minimum cost among all trajectories returned by algorithm $\textsc{alg}$ after iteration $n$. The algorithm $\textsc{alg}$ is asymptotically $\delta$-robust near-optimal if for all independent runs:
    \[
        \mathbb{P}\left(\limsup_{n \to \infty} \;\; Y_n^{\textsc{alg}} \leq h(c^*, \delta)\right) = 1,
    \]
    where $h: \mathbb{R}_{\geq 0} \times \mathbb{R}_{\geq 0} \rightarrow \mathbb{R}_{\geq 0}$  is a function of the optimum cost $c^*$ and the $\delta$ clearance such that $h(c^*, \delta) \geq c^*$.
\end{definition}

In what follows, we prove that \aoalg is both $\delta$-robust complete and asymptotic $\delta$-robust near-optimal.
We leverage the concept of Covering Balls from~\cite{li2016asymptotically}. That is, given a $\delta$, any valid trajectory $\traj$ is covered by a sequence of balls of radius $\delta$.  Let $\mathbb{B}_{r}(x) = \{x' \in \mathbb{R}^n \mid \|x - x'\| \leq \delta \}$ denote a ball of radius $r$ centered at point $x$.

\begin{definition}[Covering Balls]
    \label{def: d_covering_balls}

    For a $\delta$-robust trajectory $\traj: [0, T] \rightarrow X_{\free}$ and a given cost increment $C_{\Delta}> 0$, let $x_0, x_1, \ldots, x_m$ be a sequence of points on $\traj$ such that $\cost(\overline{x_ix_{i+1}}) = C_{\Delta}$ for all $i \in \{0,\ldots, m-1\}$.
    Then, the \emph{covering ball sequence} $$\mathbb{B}(\traj, \delta, C_{\Delta}) = \{\mathbb{B}_\delta(x_0), \ldots, \mathbb{B}_\delta(x_m)\}$$ is a set of $m + 1$ hyper-balls centered at those points with radius $\delta$.
\end{definition}

Theorem 17 in~\cite{li2016asymptotically} guarantees that, using random control sampling, there exists a strictly positive probability $\rho_{\delta} > 0$ of generating a trajectory segment between consecutive balls that remains $\delta$-close to $\traj$. To extend this result to \aoalg, we align the sequence of covering balls with our spatial decomposition $\mathcal{R}$. We identify the sequence of regions corresponding to the optimal trajectory, formally defined as follows.

\begin{definition}[Optimal Trajectory Regions]
    \label{def: d_optimal_trajectory_region}
    Consider a hyper-cube space decomposition $\mathcal{R}$ where each region $\mathcal{R}_i \in \mathcal{R}$ has diagonal length $\delta$. Let $\traj^*$ be a $\delta$-robust optimal trajectory covered by the sequence 
    $\mathbb{B}(\traj^*, \delta, C_{\Delta}) = \{\mathbb{B}_\delta(x^*_0), \ldots, \mathbb{B}_\delta(x^*_m)\}$.
    The Optimal Trajectory Regions is a set of regions $\mathcal{R}^* = \{\mathcal{R}^*_i \in \mathcal{R}\}_{i=0}^m$ such that $\mathcal{R}^*_i$ is the region that contains point $x_i^*$. 
\end{definition}

\begin{figure}[t]
  \centering
  \includegraphics[width=0.6\linewidth]{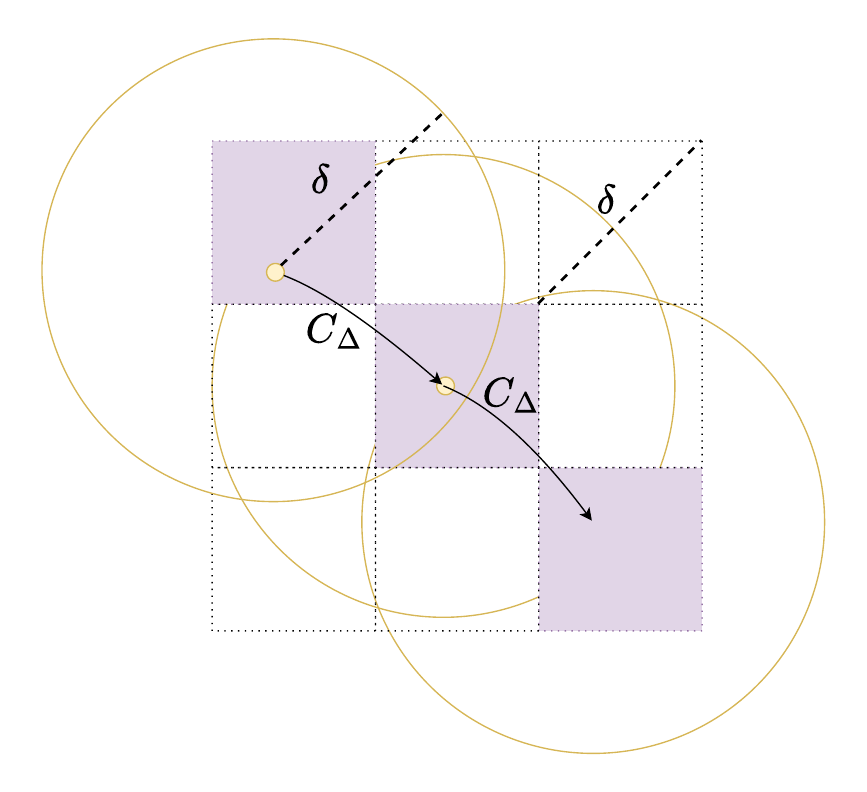}
  \caption{An illustration of a $\delta$-robust optimal trajectory $\traj^*$ (yellow nodes), with nodes equally spaced in terms of cost increments of $C_\Delta$. Each node lies within a region $\mathcal{R}_i$, and the purple regions collectively form the Optimal Trajectory Regions set $\mathcal{R}^*$.}
  \label{fig:R_Star_Illustration}
  \vspace{-1mm}
\end{figure}

Figure~\ref{fig:R_Star_Illustration} depicts an example of Definition~\ref{def: d_optimal_trajectory_region}.
We now show that \aoalg finds trajectory segments between consecutive regions in $\mathcal{R}^*$ with non-zero probability and bounded cost.

\begin{lemma}
    \label{lemma:region_to_region}
    Given a $\delta$-robust optimal trajectory $\traj^*$ with Optimal Trajectory Regions $\mathcal{R}^*$, the probability of \aoalg successfully propagating an arbitrary state $x \in \mathcal{R}_{i-1}^*$ to a state $x' \in \mathcal{R}_{i}^*$ such that the cost of the segment $\overline{x x'}$ satisfies
    \begin{align*}
        \cost(\overline{x x'}) \leq C_{\Delta} + K_c \delta
    \end{align*}
    is bounded below by a positive constant $\rho_{\mathcal{R}} > 0$.
\end{lemma}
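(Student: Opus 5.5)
The plan is to reduce the statement to Theorem~17 of~\cite{li2016asymptotically}, which handles ball-to-ball propagation, by relating the regions of $\mathcal{R}^*$ to the covering balls. Since each region $\mathcal{R}^*_i$ is a hyper-cube of diagonal $\delta$ containing $x^*_i$, every point of $\mathcal{R}^*_i$ lies within distance $\delta$ of $x^*_i$. Hence $\mathcal{R}^*_i \subseteq \mathbb{B}_\delta(x^*_i)$, and in particular any $x \in \mathcal{R}^*_{i-1}$ lies in $\mathbb{B}_\delta(x^*_{i-1})$. This containment is the bridge between the region structure of \aoalg and the covering-ball argument.

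Because $\traj^*$ is $\delta$-robust, its dynamic clearance exceeds $\delta$. By Lemma~6 of~\cite{li2016asymptotically}, for the start point $x \in \mathbb{B}_\delta(x^*_{i-1})$ there is a trajectory from $x$, with duration equal to that of the optimal segment $\overline{x^*_{i-1}x^*_i}$, that stays $\delta$-close to that segment. Theorem~17 then gives, under random sampling of a control $u \in U$ and duration $dt \in (0,T_{\text{prop}}]$, a probability $\rho_\delta > 0$ that the propagated segment $\overline{xx'}$ remains within a slightly smaller tube around $\overline{x^*_{i-1}x^*_i}$ and ends in a shrunken ball around $x^*_i$. The obstacle clearance exceeds $\delta$, so this segment is valid. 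For the cost, Lipschitz continuity (Assumption~\ref{assumption: cost}), applied after aligning start points via the shift, gives
\[
\cost(\overline{xx'}) \le \cost(\overline{x^*_{i-1}x^*_i}) + K_c\delta = C_\Delta + K_c\delta,
\]
where the start-point mismatch is absorbed as in the SST argument.

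The main obstacle is that Theorem~17 guarantees $x'$ lands in a ball around $x^*_i$, whereas the lemma requires $x' \in \mathcal{R}^*_i$, which may be a strict subset of that ball. The first thing to try is to note that $\mathcal{R}^*_i$ contains $x^*_i$ and has nonempty interior. I would then shrink the target: choose a small ball $\mathbb{B}_{\epsilon}(y) \subseteq \mathcal{R}^*_i$ with $y$ near $x^*_i$, or handle the case where $x^*_i$ lies on a region boundary by using a small ball inside $\mathcal{R}^*_i$ adjacent to $x^*_i$. Small-time local accessibility (Chow's condition) together with continuity of the flow in $(u,dt)$ implies that the set of controls and durations steering $x$ into this small ball has positive measure, with only a negligible extra cost $\le K_c\epsilon$ that can be kept within the $K_c\delta$ slack.

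Finally, the per-sample success probability must be bounded below uniformly over $x \in \mathcal{R}^*_{i-1}$ and over $i$. I would obtain this from compactness of $\overline{\mathcal{R}^*_{i-1}}$ and continuity of the success measure, then take the minimum over the finitely many indices $i = 1,\dots,m$. Since \aoalg samples $u$ and $dt$ uniformly in each of its $\lambda$ threads, each propagation attempt succeeds independently with probability at least this minimum, and this minimum is the constant $\rho_{\mathcal{R}} > 0$.
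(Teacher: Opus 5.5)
Your skeleton is the paper's. You use $\mathcal{R}^*_{i-1}\subseteq\mathbb{B}_\delta(x^*_{i-1})$, since a cube of diagonal $\delta$ has diameter $\delta$. You then invoke Theorem~17 of~\cite{li2016asymptotically} for a positive-probability $\delta$-similar segment, and Lipschitz continuity of $\cost$ for the bound $C_\Delta+K_c\delta$. You also correctly flag the step the paper's proof passes over. The paper infers that $x'$ can be placed in $\mathcal{R}^*_i$ from $\mathcal{R}^*_i\subseteq\mathbb{B}_\delta(x^*_i)$, but that inclusion only helps on the start side. On the target side, Theorem~17 guarantees only $x'\in\mathbb{B}_\delta(x^*_i)$, a set of diameter $2\delta$ that strictly contains the cube. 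So the endpoints it produces may all miss $\mathcal{R}^*_i$.

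The gap is in your repair of that step. Small-time local accessibility only says that reachable sets have nonempty interior. It does not let you reach a \emph{prescribed} ball $\mathbb{B}_\epsilon(y)\subseteq\mathcal{R}^*_i$: accessibility is not controllability, and with drift the reachable set from $x$ can avoid every ball on one side of $x^*_i$. STLA also concerns arbitrary admissible control functions. One \aoalg propagation uses a single constant $u$ for time $dt$, so STLA says nothing directly about where one-step endpoints go. Continuity of $(u,dt)\mapsto x'$ gives a positive-measure set of parameters only once some nominal parameter, or trackable nominal segment, ends in the interior of $\mathcal{R}^*_i$, and STLA does not supply one.

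The hypothesis that does supply it is the one you already use for the start point. Dynamic clearance greater than $\delta$ also lets you displace the \emph{end} point of $\overline{x^*_{i-1}x^*_i}$, in particular to an interior point $y$ of $\mathcal{R}^*_i\subseteq\mathbb{B}_\delta(x^*_i)$, while keeping a feasible segment close to $\traj^*$. Aiming the Theorem~17 argument at a small ball around $y$ inside $\mathcal{R}^*_i$ then gives the positive probability.

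Two smaller points. First, for uniformity in $x$ you get lower semicontinuity, not continuity, of the success probability (for an open success event, by continuity of the flow and Fatou's lemma). That still yields a positive minimum on the compact set $\overline{\mathcal{R}^*_{i-1}}$. Second, the Lipschitz assumption on $\cost$ is stated only for trajectories with a common start state. The start mismatch you say is ``absorbed as in SST'' therefore needs an explicit argument, which the paper also omits.
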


\begin{proof} For the case where $R_{i-1}^* = R^*_{i}$, we have that $x = x'$, and the result trivially holds. We now consider the case when $R_{i-1}^* \neq R^*_{i}$. From Theorem~17 in \cite{li2016asymptotically}, for any state in the covering ball $\mathbb{B}_{i-1}$ and using random control sampling, there exists a strictly positive probability $\rho_\delta$ of successfully generating a trajectory segment to any state within $\mathbb{B}_i$ that is $\delta$-similar to the optimal segment. By definition, the diagonal length of region $\mathcal{R}^*_{i-1}$ is $\delta$, implying $\mathcal{R}^*_{i-1} \subseteq \mathbb{B}_{i-1}$ (since $\mathbb{B}_{i-1}$ has radius $\delta$). This implies that any valid node existing in $\mathcal{R}^*_{i-1}$ is in the required covering ball $\mathbb{B}_{i-1}$.  Similarly, $\mathcal{R}^*_j \subseteq \mathbb{B}_i$. Thus, any state in $\mathcal{R}^*_{i-1}$ can transition to states in $\mathcal{R}^*_i$ with probability $\rho_{\mathcal{R}} > 0$. Since this transition is $\delta$-similar to the optimal segment which has cost $C_{\Delta}$, the Lipschitz continuity of the cost function guarantees
    \begin{align*}
        |\cost(\overline{x x'}) - C_{\Delta}| \leq K_c \delta \quad \implies \quad \cost(\overline{xx'}) \leq C_{\Delta} + K_c \delta. 
    \end{align*}
\end{proof}

Now, we present our main analytical result.

\begin{theorem}
    \label{theorem:near-optimal}
    \aoalg is asymptotically $\delta$-robustly near-optimal. 
\end{theorem}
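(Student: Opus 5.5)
We follow the inductive covering-ball argument of SST (Theorem~17 and its corollaries in \cite{li2016asymptotically}), but index the induction by the Optimal Trajectory Regions $\mathcal{R}^*=\{\mathcal{R}^*_0,\dots,\mathcal{R}^*_m\}$ of Definition~\ref{def: d_optimal_trajectory_region} rather than by witness balls. Fix a $\delta$-robust optimal trajectory $\traj^*$ of cost $c^*$ and its covering ball sequence with increment $C_\Delta$, so that $m = c^*/C_\Delta$. The inductive claim is:
\[
\text{for each } i, \text{ almost surely there is an iteration after which } \mathcal{R}^*_i \text{ holds a tree node with cost at most } i(C_\Delta + K_c\delta).
\]
The base case $i=0$ is immediate, since $x_{\init}\in\mathcal{R}^*_0$ has cost $0$ and $\cost(\mathcal{R}^*_0)=0$ is never exceeded. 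Once the claim holds for $i=m$, the node in $\mathcal{R}^*_m$ lies within $\delta$ of $x^*_m\in X_\goal$. We then set
\[
h(c^*,\delta) = m(C_\Delta+K_c\delta) = \Bigl(1+\tfrac{K_c\delta}{C_\Delta}\Bigr)c^*,
\]
using the same goal-tolerance convention as \cite{li2016asymptotically}. This $h$ satisfies $h\ge c^*$, as Definition~\ref{def:asymp_delta_near_opt} requires.

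For the inductive step, suppose region $\mathcal{R}^*_{i-1}$ eventually has recorded cost $\cost(\mathcal{R}^*_{i-1})\le (i-1)(C_\Delta+K_c\delta)$. Region costs are monotonically non-increasing because of the $\min$ in \texttt{Propagate}. Let $y$ be the node that currently realizes this cost. By Lemma~\ref{lemma:region_to_region}, each time $y$ is in $V_A$ and is expanded, each of its $\lambda$ samples independently produces, with probability at least $\rho_{\mathcal{R}}$, a child in $\mathcal{R}^*_i$ whose segment cost is at most $C_\Delta+K_c\delta$. That child's cumulative cost is therefore at most $i(C_\Delta+K_c\delta)$, so \texttt{Propagate} lowers $\cost(\mathcal{R}^*_i)$ to at most this value, and \texttt{UpdateTree} (or a cheaper competitor) keeps the bound. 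The bound on $\cost(\mathcal{R}^*_i)$ is thus established the first time such an expansion succeeds. If the expansions of region-best nodes of $\mathcal{R}^*_{i-1}$ recur infinitely often, the probability of never succeeding is at most $\lim_k (1-\rho_{\mathcal{R}})^k = 0$, and a union bound over the finitely many $m$ steps gives probability one. One subtlety remains: the region-best node may change over time. This only helps, because any replacement node has lower cost and also lies in $\mathcal{R}^*_{i-1}\subseteq\mathbb{B}_\delta(x^*_{i-1})$, so Lemma~\ref{lemma:region_to_region} applies to it as well.

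The main obstacle is proving that expansions from the current best node of $\mathcal{R}^*_{i-1}$ occur infinitely often, which requires checking the node-management logic of \texttt{PruneNodes}. A node that realizes its region's cost is never sent to $V_T$, since the condition $\cost(\overline{x_{\init}x})>\cost(R_i)$ fails for it. It can only be placed in $V_I$, which happens when some ancestor has been dominated. While it sits in $V_I$ and remains region-best, its counter $I_{count}$ increases every iteration, so within $I_{\max}+1$ iterations it is moved back into $V_A$ and expanded. If instead it stops being region-best, a strictly cheaper node in the same region has replaced it, and we restart the argument with that node. Region costs are bounded below and can only improve, but they need not take finitely many values, so a pure counting argument does not suffice. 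We handle this by noting that every iteration in which $y$ is region-best lies within $I_{\max}+1$ iterations of an iteration in which some region-best node of $\mathcal{R}^*_{i-1}$ is in $V_A$. Expansions in that region therefore have positive frequency, and each one succeeds with probability at least $\rho_{\mathcal{R}}$, which yields the geometric tail bound. We would also remark that the parallel races discussed in the Remark after Algorithm~\ref{algo:ao_Propagate} only ever lower region costs, so they do not affect the monotonicity argument.
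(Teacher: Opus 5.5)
Your proposal is correct and follows essentially the same route as the paper's proof. Both argue by induction along the Optimal Trajectory Regions $\mathcal{R}^*$, apply Lemma~\ref{lemma:region_to_region} at each step to add at most $C_\Delta + K_c\delta$ per region, and arrive at $h(c^*,\delta)=c^*(1+K_c\delta/C_\Delta)$. Your monotonicity-of-region-costs argument replaces the paper's acceptance/rejection case split, and your \texttt{PruneNodes} reactivation-counter argument, with the geometric tail bound, justifies a step the paper only asserts: that region-best nodes of $\mathcal{R}^*_{i-1}$ are expanded infinitely often.
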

\begin{proof}
    %
    Consider $\delta$-robust optimal trajectory $\traj^*$ with cost $c^* = \cost(\traj^*)$ and its corresponding optimal trajectory regions $\mathcal{R}^* = \{\mathcal{R}^*_j\}_{j=0}^{m}$ as defined in Definition~\ref{def: d_optimal_trajectory_region}.
    We proceed by induction to show that \aoalg generates a path along the Optimal Trajectory Regions $\mathcal{R}^*_j$ with accumulated cost $C_j \leq (C_{\Delta} + K_c \delta)\cdot j$. We first define the events:
    \begin{enumerate}
        \item $A_j^{(n)}$: On iteration $n$, \aoalg generates a trajectory $\traj_j$ terminating in region $\mathcal{R}^*_j$ with cost satisfying $\cost(\traj_j) \leq (C_{\Delta} + K_c \delta)\cdot j$.
        \item $E_j^{(n)}$: By iteration $n$, \aoalg has generated at least one trajectory satisfying event $A_j^{(n)}$.
    \end{enumerate}

    Base Case ($E_0^{(n)}$):
    The root node $x_0 = x^*_0 \in \mathcal{R}^*_0$ has cost 0, satisfying the bound for $j=0$. Thus $P(E_0^{(n)}) = 1$ for all $n \geq 1$.
    
    Inductive Step (from $E_{j-1}^{(n)}$ to $E_j^{(n)}$):
    Assume event $E_{j-1}^{(n)}$ holds. \aoalg has a node in $\mathcal{R}^*_{j-1}$ with cost $C_{j-1} \leq (j-1)(C_{\Delta} + K_c \delta)$. As $n \to \infty$, this node is selected for expansion infinitely often. By Lemma~\ref{lemma:region_to_region}, there is a positive probability $\rho_{\mathcal{R}}$ of generating a trajectory segment from $\mathcal{R}^*_{j-1}$ to $\mathcal{R}^*_j$ with cost bounded above by $C_{\Delta} + K_c \delta$.
    Then, two mutually exclusive scenarios arise:
    
    Case 1 (Acceptance): \aoalg directly accepts trajectory $\traj_j$ (or $\traj_{j-1} = \traj_j$), as it achieves the lowest cost for its terminal region $\mathcal{R}^*_j$. This occurs with strictly positive probability $\gamma_1 > 0$.
    
    Case 2 (Rejection): \aoalg rejects trajectory $\traj_j$ due to the existence of another trajectory $\traj'$ with a lower cost endpoint node already in region $\mathcal{R}^*_j$. Since the rejected trajectory $\traj_j$ is $\delta$-optimal, the accepted trajectory $\traj'$ must also be $\delta$-optimal. Thus, even rejection results in having a $\delta$-optimal trajectory in region $\mathcal{R}^*_j$, and this case occurs with strictly positive probability $\gamma_2 > 0$.

    In either case, the region $\mathcal{R}^*_j$ is guaranteed to contain a node satisfying the $\delta$-bounded cost. Thus, $\lim_{n \to \infty} \mathbb{P}(E_j^{(n)}) = \gamma_1 + \gamma_2 = 1$. By induction, the event $E_j^{(n)}$ holds for all segment indices $j$. We now derive the cost bound. From Lemma~\ref{lemma:region_to_region}, each segment contributes an additive error of $K_c \delta$. Thus, for the $j$-th region, the accumulated cost of the trajectory $\traj_j$ satisfies:
    \begin{align*}
        \cost(\traj_j) &\leq \cost(\traj^*_j) + j \cdot K_c \cdot \delta = j \cdot C_{\Delta} + j \cdot K_c \cdot \delta.
    \end{align*}
    Let $m = \lceil c^*/C_{\Delta} \rceil$ be the index of the final region containing the goal state. We have that
    \begin{align*}
        \cost(\traj_m) &\leq c^* + \frac{c^*}{C_{\Delta}} \cdot K_c \cdot \delta = c^* \left( 1 + \frac{K_c \cdot \delta}{C_{\Delta}} \right).
    \end{align*}

    Let $Y^{\textsc{alg}}_n$ denote the cost of the solution found by \aoalg at iteration $n$. The probability that the algorithm finds a solution satisfying this bound corresponds to the probability of the final event $E_m^{(n)}$:
    \begin{align*}
        \mathbb{P}\left( Y^{\textsc{alg}}_n \leq c^* \left( 1 + \frac{K_c \delta}{C_{\Delta}} \right) \right) = \mathbb{P}(E_m^{(n)}).
    \end{align*}
    As established in the inductive step, since the transition probability $\rho_{\mathcal{R}} > 0$, the event $E_m^{(n)}$ occurs almost surely as $n \to \infty$:
    \begin{align*}
        \mathbb{P}\left( \limsup_{n \to \infty} \;\; Y^{\textsc{alg}}_n \leq c^* \left( 1 + \frac{K_c \delta}{C_{\Delta}} \right) \right) = \lim_{n \to \infty} \mathbb{P}(E_m^{(n)}) = 1.
    \end{align*}
\end{proof}

\begin{corollary}
    \aoalg solves Problem~\ref{problem} with a near-optimality parameter $\beta > 0$ (see Definition~\ref{def:near optimal}) if $ \delta \leq \frac{C_{\Delta}}{K_c} \beta$.
\end{corollary}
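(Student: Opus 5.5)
The corollary follows from Theorem~\ref{theorem:near-optimal} by unwinding the explicit bound established in its proof. That proof shows that, almost surely,
\[
\limsup_{n\to\infty} Y^{\textsc{alg}}_n \;\leq\; h(c^*,\delta) \;=\; c^*\left(1+\frac{K_c\,\delta}{C_{\Delta}}\right).
\]
Here $c^*$ is the optimal cost over $\delta$-robust solutions, $C_\Delta$ is the cost increment of the covering-ball sequence in Definition~\ref{def: d_covering_balls}, and $K_c$ is the Lipschitz constant from Assumption~\ref{assumption: cost}.

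The plan is to substitute the hypothesis $\delta \leq \frac{C_\Delta}{K_c}\beta$, which is equivalent to $\frac{K_c\delta}{C_\Delta}\leq \beta$ because $K_c, C_\Delta>0$. The factor $1+\frac{K_c\delta}{C_\Delta}$ is monotone increasing in $\delta$, so the bound becomes $\limsup_n Y^{\textsc{alg}}_n \leq (1+\beta)c^*$ almost surely. This is exactly the inequality in Definition~\ref{def:near optimal}.

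Next I would pass from the $\limsup$ statement to the existence of an actual near-optimal trajectory. The algorithm only keeps $\traj$ when its cost strictly improves (Algorithm~\ref{algo:ao_TreeUpdate}), so $Y^{\textsc{alg}}_n$ is nonincreasing in $n$. Moreover, the event $E_m^{(n)}$ from the proof of Theorem~\ref{theorem:near-optimal} occurs at some finite iteration with probability one. Once it occurs, the returned trajectory lies in $\Traj_\text{sol}$, since it is valid and terminates in $X_\goal$, and its cost is at most $c^*(1+\frac{K_c\delta}{C_\Delta})\leq(1+\beta)c^*$. Hence, with probability one, \aoalg eventually returns a control trajectory whose induced state trajectory is near-optimal, which is the requirement of Problem~\ref{problem}.

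The main subtlety, and the step I expect to need care, is reconciling the two notions of optimum. Definition~\ref{def:near optimal} uses the minimum over all solutions, whereas Definition~\ref{def:asymp_delta_near_opt} uses the minimum over $\delta$-robust solutions, and the latter is at least as large as the former. To handle this, I would state the corollary relative to the $\delta$-robust optimum, as the paper implicitly does by working in the setting where a $\delta$-robust optimal trajectory $\traj^*$ exists. Alternatively, I would remark that the two optima coincide whenever the optimal trajectory is itself $\delta$-robust. Finally, I would note that $m=\lceil c^*/C_\Delta\rceil$ introduces at most one extra increment, which the theorem's bound already absorbs, so no further calculation is needed.
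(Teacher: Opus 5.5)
Your proposal is correct and matches the paper's implicit argument: you substitute $K_c\delta/C_\Delta \le \beta$ into the bound $c^*(1+K_c\delta/C_\Delta)$ from the proof of Theorem~\ref{theorem:near-optimal}, and your remarks on the monotonicity of $Y^{\textsc{alg}}_n$ and on the gap between the $\delta$-robust optimum and the global optimum of Definition~\ref{def:near optimal} are sound refinements the paper leaves unstated. One small correction: the ceiling in $m=\lceil c^*/C_\Delta\rceil$ is not absorbed by that bound in general, since $\lceil c^*/C_\Delta\rceil(C_\Delta+K_c\delta)$ can exceed $c^*(1+K_c\delta/C_\Delta)$; the clean bound holds only when $C_\Delta$ is chosen so that $c^*/C_\Delta$ is an integer, which is always possible because $C_\Delta$ is free and which the paper's computation tacitly assumes.
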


\begin{remark}
    Explicitly computing the required (diagonal) $\delta$ for a specific $\beta$ is generally infeasible, as both the Lipschitz constant $K_c$ and the segmentation cost $C_{\Delta}$ of the optimal trajectory are unknown for general non-linear systems. This limitation is inherent to the analysis of asymptotically near-optimal planners such as SST~\cite{li2016asymptotically}. In practice, this theoretical result serves as a qualitative guideline rather than a quantitative rule: the term $\frac{\delta}{C_{\Delta}}$ represents the relative decomposition-based error of the algorithm. Since $\delta$ is a tunable parameter, the approximation error can be made arbitrarily small ($\lim_{\delta \rightarrow 0} 1 + \frac{K_c \delta}{C_{\Delta}} = 1$), allowing \aoalg to approach optimality up to any arbitrary precision.
\end{remark}

\begin{remark}[Non-Uniform and Adaptive Decompositions]
The analysis carries over to non-uniform decompositions without losing asymptotic near-optimality, provided any region along an optimal trajectory has diameter no greater than the local clearance. It likewise supports adaptive strategies in which $\delta$ is decreased over time to refine solution quality (analogous to SST$^*$~\cite{li2016asymptotically}). A full analysis of partitioning strategies is deferred to future
work.
\end{remark}

From Theorem~\ref{theorem:near-optimal}, we establish that \aoalg is probabilistically $\delta$-robust complete.

\begin{theorem}
    \label{theorem:delta-robust-complete}
    \aoalg is probabilistically $\delta$-robust complete.
\end{theorem}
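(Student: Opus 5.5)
The plan is to derive completeness as a direct consequence of Theorem~\ref{theorem:near-optimal}, essentially by discarding the cost information in its proof. Fix a planning problem that admits at least one $\delta$-robust solution. The set of $\delta$-robust solutions is nonempty, so let $c^*$ be the infimum cost over this set. Pick a $\delta$-robust solution $\traj^*$; taking a near-minimizer is enough. Cover $\traj^*$ with the covering balls $\mathbb{B}(\traj^*,\delta,C_\Delta)$ of Definition~\ref{def: d_covering_balls}, and form the associated Optimal Trajectory Regions $\mathcal{R}^*=\{\mathcal{R}^*_j\}_{j=0}^m$ of Definition~\ref{def: d_optimal_trajectory_region}. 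Here $m$ is finite because $c^*<\infty$ and $C_\Delta>0$.

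Next I would reuse the events $E_j^{(n)}$ from the proof of Theorem~\ref{theorem:near-optimal}. That proof establishes $\lim_{n\to\infty}\mathbb{P}(E_m^{(n)})=1$. On $E_m^{(n)}$, the tree contains a trajectory $\traj_m$ that ends in $\mathcal{R}^*_m$ and is $\delta$-close to $\traj^*$ segment by segment. Two facts then give membership in $\Traj_\text{sol}$.

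\begin{itemize}
\item[(i)] \emph{Validity.} Each propagated segment is checked against $X_\free$ in \texttt{Propagate}. Independently, the $\delta$-closeness from Lemma~\ref{lemma:region_to_region}, combined with the obstacle clearance of $\traj^*$ being greater than $\delta$, shows these segments lie in $X_\free$.
\item[(ii)] \emph{Reaching the goal.} The endpoint of $\traj^*$ lies in $X_\goal$, and its dynamic and obstacle clearances exceed $\delta$. Under the standard $\delta$-robustness convention of \cite{li2016asymptotically}, the $\delta$-neighbourhood of that endpoint lies in $X_\goal$. The final region has diagonal $\delta$, so it sits inside $\mathbb{B}_\delta(x^*_m)$, and hence $\traj_m(t_f)\in X_\goal$.
\end{itemize}

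It follows that $\{E_m^{(n)}\}\subseteq\{\exists\,\traj\in\Traj_n^{\textsc{alg}}:\traj\in\Traj_\text{sol}\}$. Therefore
\begin{equation*}
\liminf_{n\to\infty}\mathbb{P}\left(\exists\,\traj\in\Traj_n^{\textsc{alg}}\text{ s.t. }\traj\in\Traj_\text{sol}\right)\geq\lim_{n\to\infty}\mathbb{P}(E_m^{(n)})=1 .
\end{equation*}
This is Definition~\ref{def:delta-robust-completeness}.

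One further point must hold: a goal-reaching node, once generated, must still be recorded at later iterations $n'\ge n$. Pruning may later move the node to $V_T$. However, \texttt{UpdateTree} has already stored the solution $\traj$, and it overwrites $\traj$ only with a cheaper goal-reaching trajectory. So the set of discovered solution trajectories never becomes empty, and the probability is monotone in $n$.

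The main obstacle I expect is step (ii), the goal-membership argument. The definitions in the paper only guarantee a node in the region $\mathcal{R}^*_m$ that contains $x^*_m$, not a node in $X_\goal$. I would handle this by invoking the convention that $\delta$-robustness includes clearance of the goal set, so that $\mathbb{B}_\delta(x^*_m)\subseteq X_\goal$. If that convention is unavailable, I would instead truncate $\traj^*$ at a point whose $\delta$-ball lies in $X_\goal$ and place the last covering ball there.

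A secondary gap is the claim inherited from Theorem~\ref{theorem:near-optimal} that region-best nodes are expanded infinitely often. I would justify it from the reactivation rule in \texttt{PruneNodes}: a region-best node in $V_I$ returns to $V_A$ after $I_{\max}$ iterations. Each expansion is an independent trial with success probability at least $\rho_{\mathcal{R}}$. A Borel--Cantelli argument then shows that the failure probability decays like $(1-\rho_{\mathcal{R}})^{k}$ over $k$ expansions.
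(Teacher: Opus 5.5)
Your proposal is correct and takes the same route as the paper, which proves this theorem in one line as a direct consequence of Theorem~\ref{theorem:near-optimal}. Your additional checks (validity from the collision test in \texttt{Propagate}, goal membership of the final region, persistence of the stored solution, and repeated expansion of region-best nodes) make explicit what that one line leaves implicit. One small correction: $E_m^{(n)}$ only guarantees a region-best node in $\mathcal{R}^*_m$ with bounded cost, and in the rejection case that node need not be $\delta$-close to $\traj^*$, so rest your validity argument solely on the collision checks, which already suffice.
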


\begin{proof}
    This result follows directly from Theorem~\ref{theorem:near-optimal}.
\end{proof}

%% file: sections/experiments.tex
We evaluate \aoalg's performance across four 3D environments using three dynamical systems. Three environments are taken from \cite{Perrault2025kinopax} (Fig.~\ref{fig:environments}a-c), while a fourth environment (Fig.~\ref{fig:environments}d) is designed to test \aoalg's capability for planning in tight corridors over extended horizons. The robot dynamics considered are: (i) a 6D double integrator, (ii) a 6D Dubins airplane~\cite{chitsaz2007time}, and (iii) a 12D nonlinear quadcopter~\cite{etkin1995dynamics}.

\begin{figure}[t]
    \centering
     \begin{subfigure}[b]{0.45\columnwidth}
        \centering
        \includegraphics[width=\textwidth]{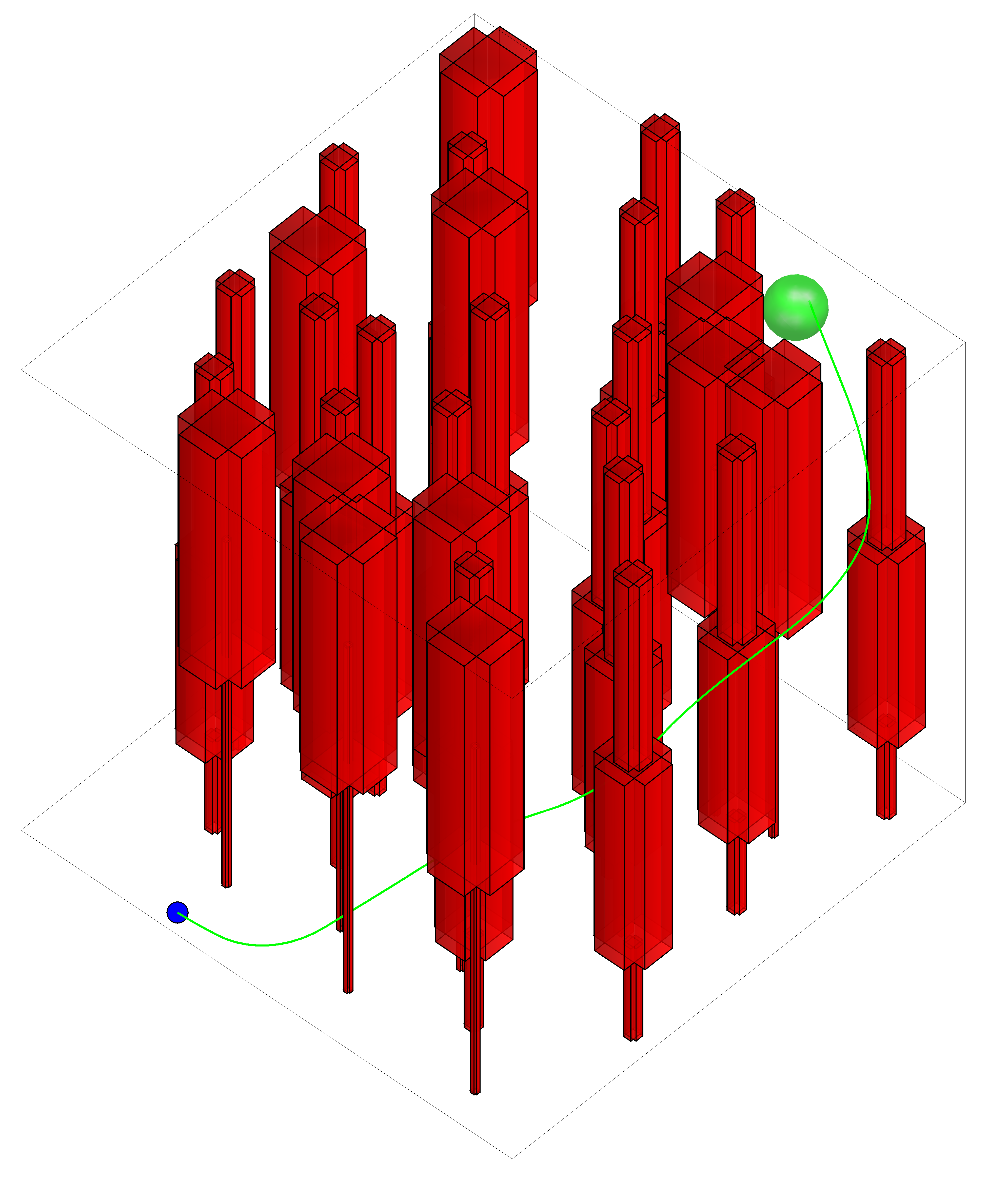}
        \caption{Forest}
        \label{fig:trees}
    \end{subfigure}
    ~~
    \begin{subfigure}[b]{0.45\columnwidth}
        \centering
        \includegraphics[width=\textwidth]{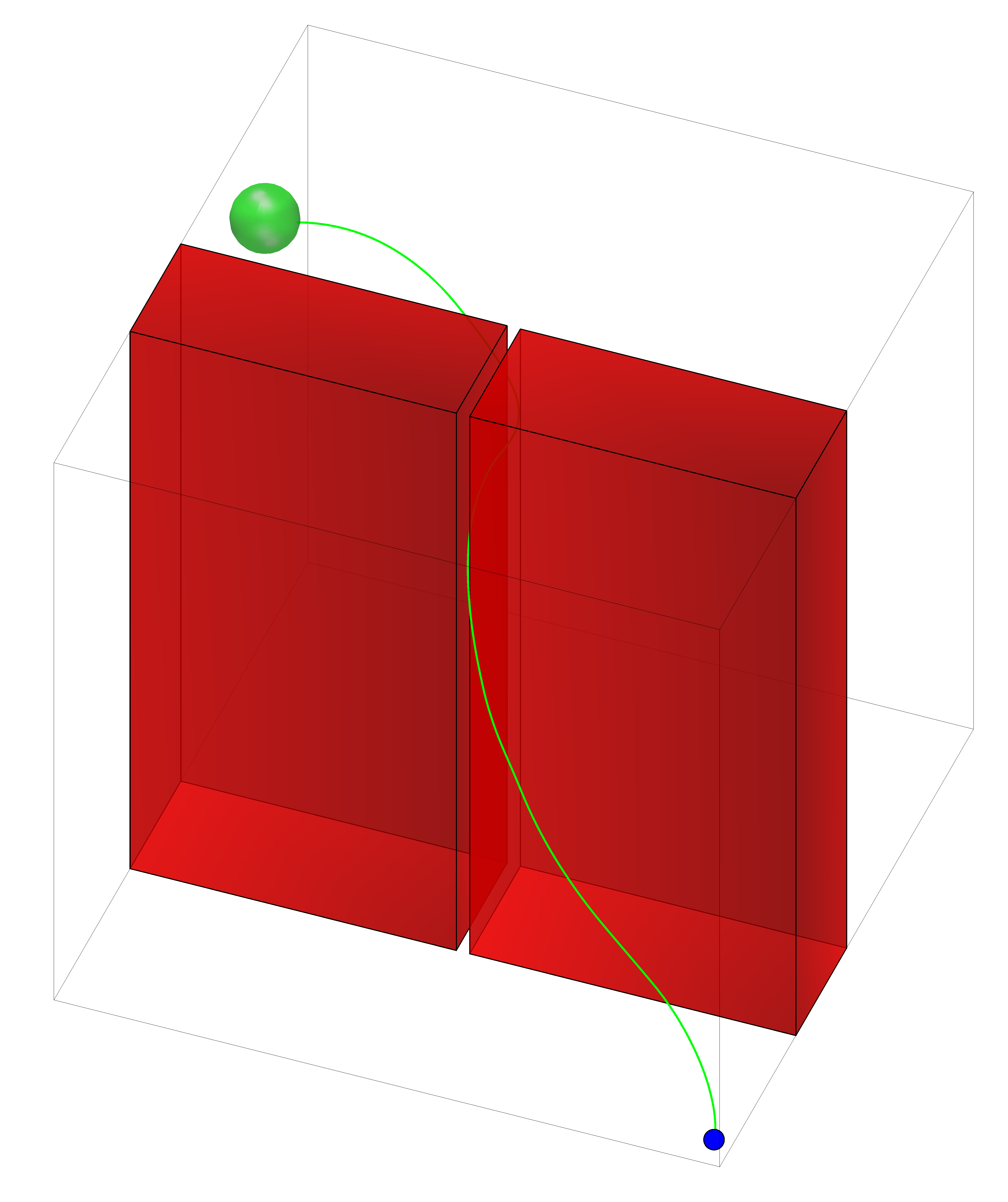}
        \caption{Narrow}
        \label{fig:narrowPassage}
    \end{subfigure}
     
    \begin{subfigure}[b]{0.45\columnwidth}
        \centering
        \includegraphics[width=\textwidth]{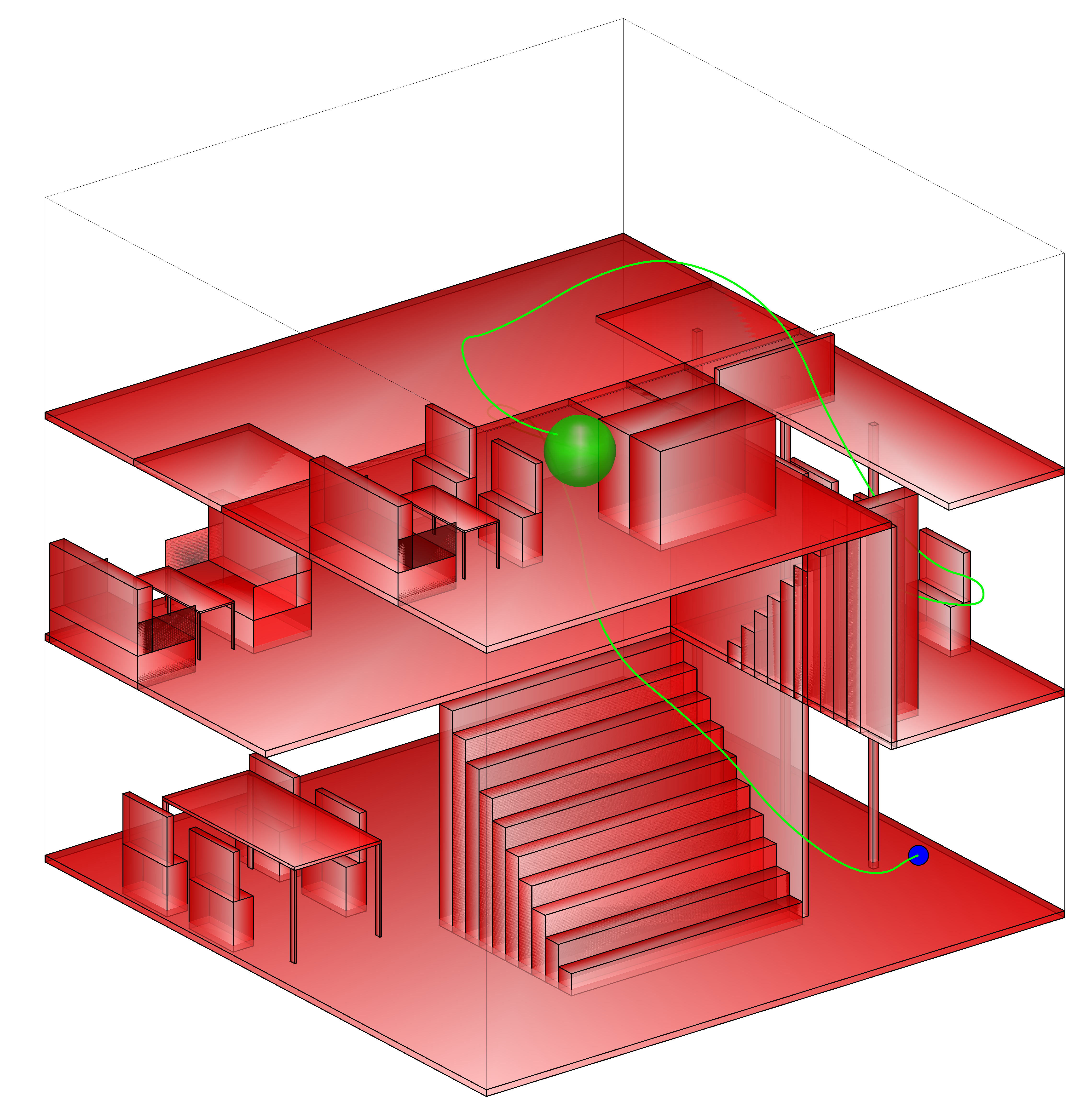}
        \caption{House}
        \label{fig:house}
    \end{subfigure}
    ~~
    \begin{subfigure}[b]{0.45\columnwidth}
        \centering
        \includegraphics[width=\textwidth]{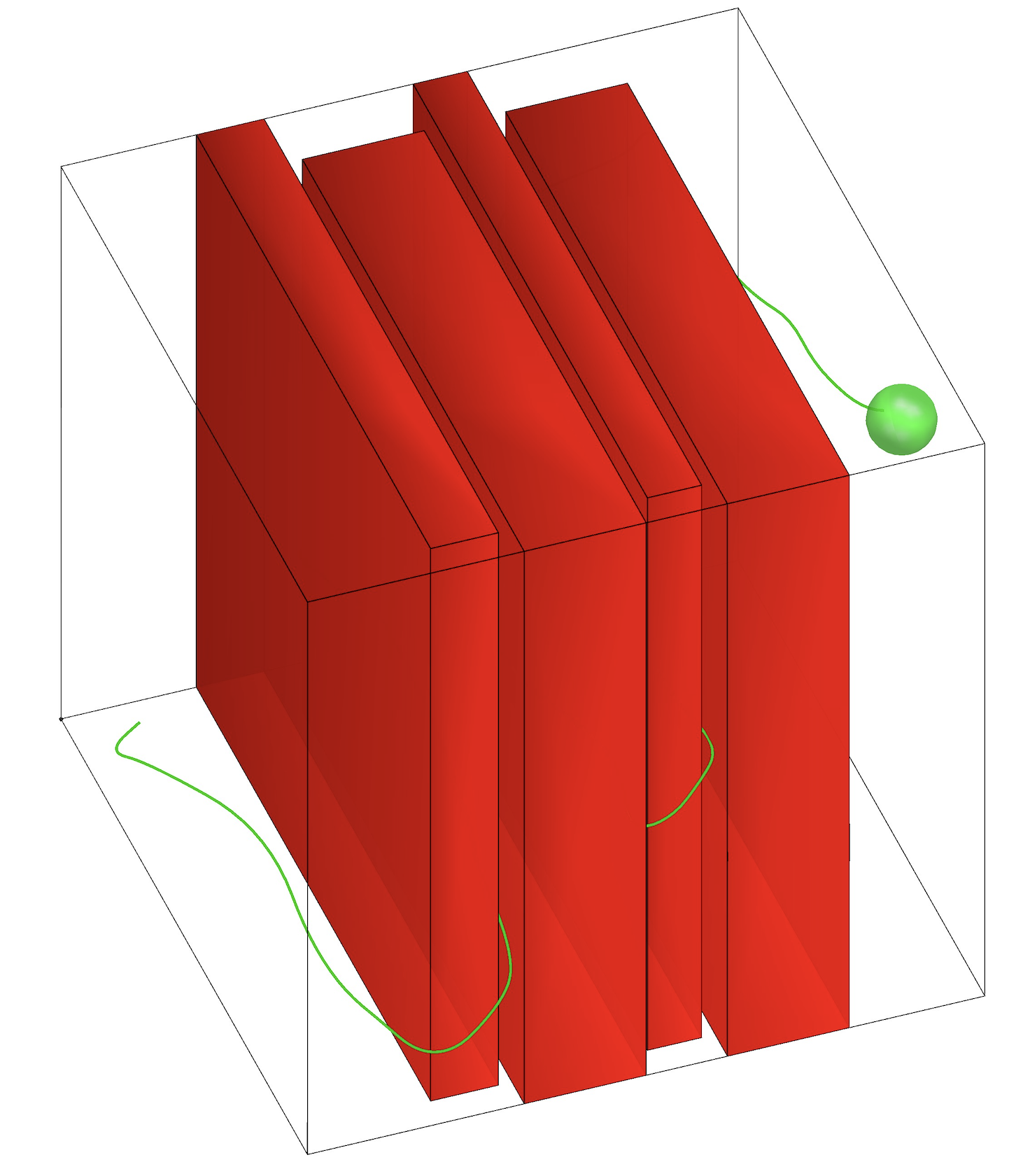}
        \caption{Zig Zag}
        \label{fig:zigzag}
    \end{subfigure}
    \caption{Considered Environments. 
    }
    \label{fig:environments}
\end{figure}

We implemented \aoalg in CUDA C and benchmarked it alongside \alg on an NVIDIA RTX 4090 GPU with 16,384 CUDA cores and 24\,GB of RAM, as well as embedded Jetson Orin Nano GPU. The serial SBMP SST, implemented in C++ using OMPL~\cite{sucan2012the-open-motion-planning-library}, ran on an Intel i9-14900K CPU (base clock 4.4\,GHz, 128\,GB RAM). Implementation details and choices of hyper-parameters are provided in Appendix~\ref{appendix: hyperparameters}. 

For each dynamical system and environment, we compare \aoalg against \alg~\cite{Perrault2025kinopax} and SST~\cite{li2016asymptotically}, assessing time to first solution relative to \alg and solution quality relative to SST.

To highlight how the choice of decomposition size affects the performance of \aoalg, we use two configurations of \aoalg with extreme values of $\delta$ in each experiment: 
(i) \aoalg{}\textit{-large-$\delta$} uses a very coarse decomposition and small expected tree size $t_e$\footnote{$t_e$ is a parameter 
that
defines the maximum number of nodes in GPU memory and directly affects branching factor $\lambda$; see Appendix~\ref{appendix: hyperparameters} for details.}, emphasizing rapid first solutions, and
(ii) \aoalg{}\textit{-small-$\delta$} uses a fine decomposition, 
prioritizing long-horizon cost improvement (see Appendix~\ref{appendix: empirical eval details} for details). These configurations characterize how $\delta$ trades first-solution latency against convergence quality. 

Finally, to assess deployability on embedded hardware, we evaluate \aoalg on an embedded Jetson Orin Nano GPU across a range of intermediate $\delta$ values and compare against \alg in two of the environments.

\subsection{Benchmark Results}

Tables~\ref{table:AO_KPAX_Results_Env1}-\ref{table:AO_KPAX_Results_Env3}
summarize results, 
presenting median first solution time, normalized median first solution cost, median final solution time, normalized median final solution cost, 
and the success rate across 100 queries. 

Table~\ref{table:AO_KPAX_Results_Env1} shows that, for the 6‑D double‑integrator, \aoalg{}\textit{-large-$\delta$} reaches a first solution in under $11 \, \mathrm{ms}$ across all environments, comparable to \alg, which does so in under $8\,\mathrm{ms}$. A comparison of their first solution times and solution quality over time is shown in Figure~\ref{fig:box_KPAX_AOKPAX}. Relative to SST, \aoalg{}\textit{-large-$\delta$} finds the first solution $650\times$, $1600\times$, $1100\times$, and $3300\times$ faster in Environments~a–d, respectively. On average, \aoalg{}\textit{-large-$\delta$}'s first solution cost is $65\%$ of SST’s first‑solution cost. After $10\,\mathrm{ms}$ of planning, \aoalg{}\textit{-large-$\delta$} converges to a local optimum with an average cost that is $61\%$ of SST’s first solution cost. In comparison, the fine-grained hyperparameter variant, \aoalg{}\textit{-small-$\delta$}, converges to a local optimum after an average of $160\,\mathrm{s}$ of planning, achieving an average cost that is $55\%$ of SST’s initial cost. Meanwhile, SST, after the full five‑minute duration, improves only to $84\%$ of its own initial cost. A visualization comparing SST with \aoalg{}\textit{-large-$\delta$} and \aoalg{}\textit{-small-$\delta$} is shown in Figure~\ref{fig:box_KPAX_AOKPAX_SST}.

\begin{table}[t]
    \centering
    \caption{
    Benchmark results over 100 trials with a 300-second maximum planning time for the 6D double integrator system.
    }
    \label{table:AO_KPAX_Results_Env1}
    \begin{tabular}{l rrrr r}
        \toprule
                                   \multirow{2}{*}{Algorithm}   & 
                                   \multicolumn{2}{c}{\underline{\hspace{4mm}First Sol.\hspace{4mm}}}
                                   &
                                   \multicolumn{2}{c}{\underline{\hspace{5mm}Final Sol.\hspace{4mm}}}
                                   & \multirow{2}{*}{Succ.\%}\\
                                   & $t$ (ms) & $\cost$ & $t$ (ms) & $\cost$ &  \\ 
        \hline
        \multicolumn{6}{c}{Environment a} \\ 
        \hline
        SST                 & 1950.0 & 1.00 & 165201.5 & 0.79 & 100 \\
        \alg              & 3.7 & 0.70 & 4.0 & 0.70 & 100 \\
        \aoalg{}\textit{-large-$\delta$}              & 3.0 & 0.72 & 6.2 & 0.68 & 100 \\
        \aoalg{}\textit{-small-$\delta$}     & 1173.0 & 0.68 & 180017.5 & 0.64 & 100 \\
        \hline
        \multicolumn{6}{c}{Environment b} \\ 
        \hline
        SST                   & 3525.0 & 1.00 & 97060.0 & 0.78 & 100 \\
        \alg               & 2.8 & 0.50 & 2.8 & 0.49 & 100 \\
        \aoalg{}\textit{-large-$\delta$}                 & 2.2 & 0.50 & 5.3 & 0.47 & 100\\
        \aoalg{}\textit{-small-$\delta$}      & 1002.2 & 0.48 & 252173.0 & 0.44 & 100 \\
        \hline
        \multicolumn{6}{c}{Environment c} \\ 
        \hline
        SST                   & 5225.5 & 1.00 & 131089.5 & 0.84 & 100 \\
        \alg             & 5.1 & 0.62 & 5.4 & 0.62 & 100 \\
        \aoalg{}\textit{-large-$\delta$}              & 4.7 & 0.62 & 8.6 & 0.56 & 100 \\
        \aoalg{}\textit{-small-$\delta$}     & 1556.0 & 0.57 & 133854.0 & 0.48 & 100 \\
        \hline
        \multicolumn{6}{c}{Environment d} \\ 
        \hline
        SST                     & 34760.0 & 1.00 & 146466.5 & 0.95 & 100 \\
        \alg              & 7.7 & 0.83 & 8.5 & 0.83 & 100 \\
        \aoalg{}\textit{-large-$\delta$}               & 10.6 & 0.79 & 19.9 & 0.73 & 100 \\
        \aoalg{}\textit{-small-$\delta$}     & 2698.1 & 0.72 & 91483.6 & 0.63 & 100 \\
        \hline
    \end{tabular}
\end{table}

\begin{figure}[t]
    \centering
    \includegraphics[width=0.9\columnwidth]{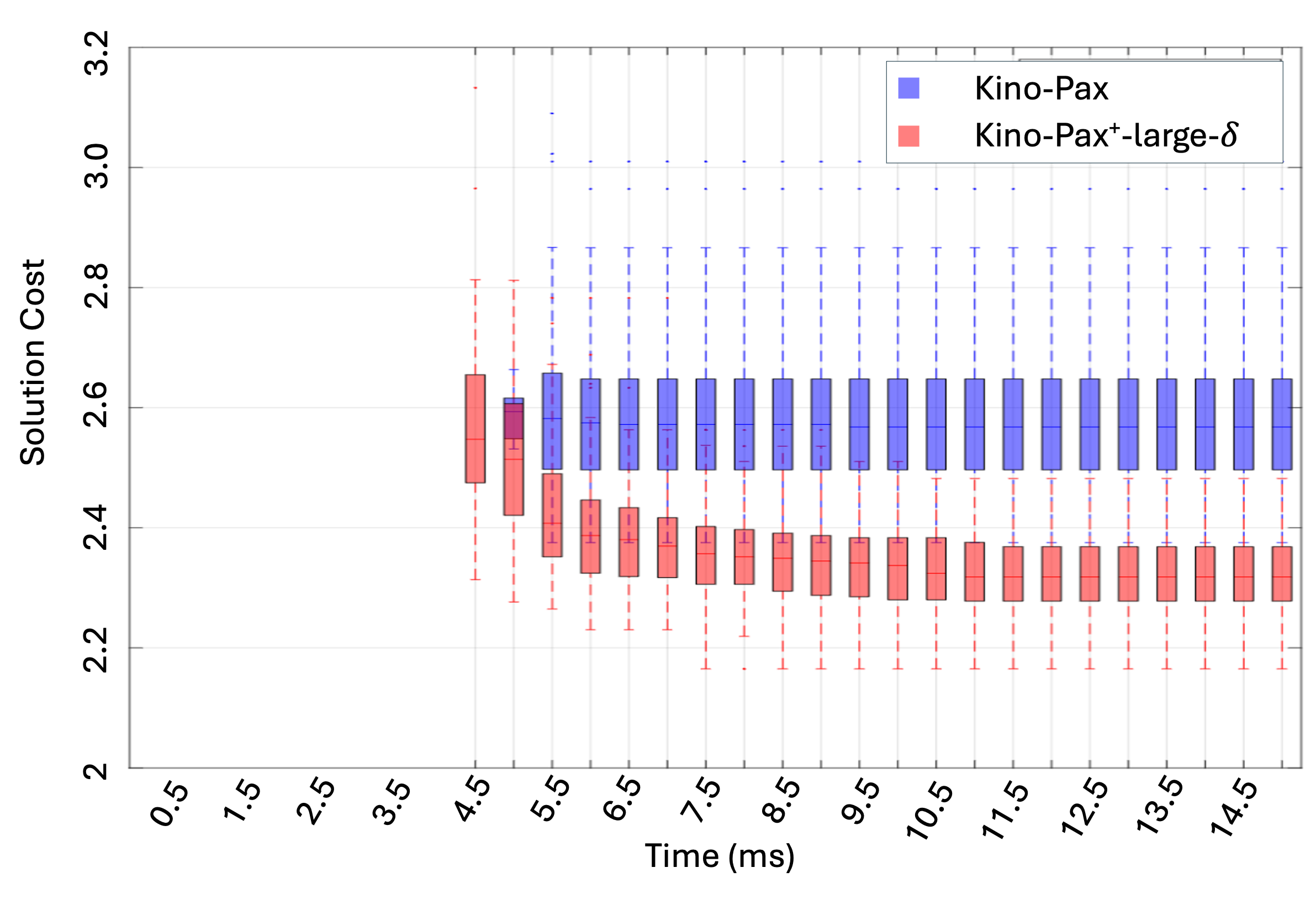}
    \caption{Planning performance of \alg and \aoalg{}\textit{-large-$\delta$} on the 6D double integrator in Environment~c over a $15\,\mathrm{ms}$ planning duration.}
    \label{fig:box_KPAX_AOKPAX}
\end{figure}

\begin{figure}[ht]
    \centering
    \includegraphics[width=0.95\columnwidth]{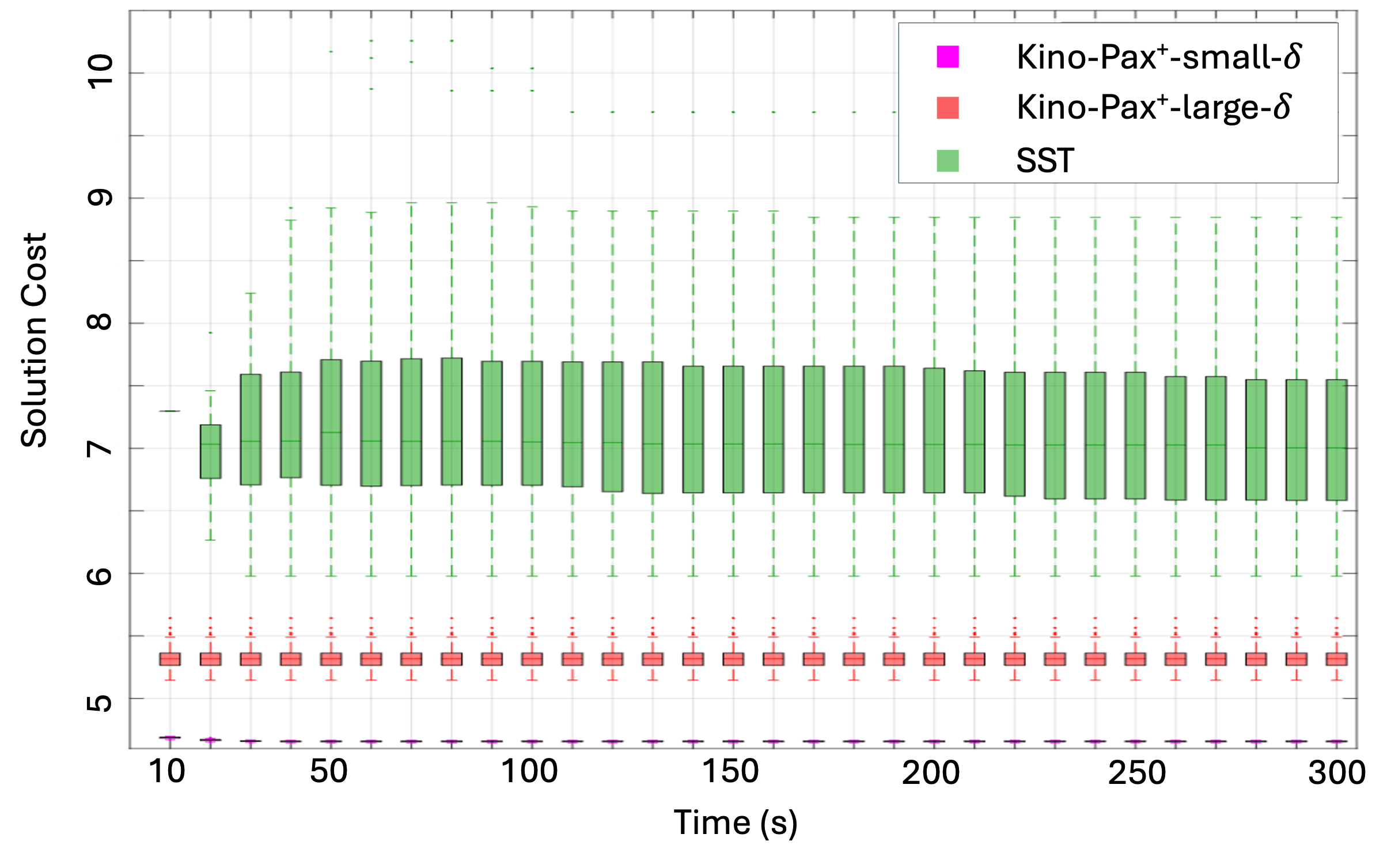}
    \caption{Performance of \aoalg (both variants) and SST on 6D double integrator in the zig-zag Environment d
    over a $300\,\mathrm{s}$ planning time.}
    \label{fig:box_KPAX_AOKPAX_SST}
\end{figure}

\begin{table}[t]
    \centering
    \caption{
    Benchmark results over 100 trials with a 300-second maximum planning time for the Dubins Airplane system.
    }
    \label{table:AO_KPAX_Results_Env2}
    \begin{tabular}{l r  rrrrr }
        \toprule
                            \multirow{2}{*}{Algorithm}   & 
                                   \multicolumn{2}{c}{\underline{\hspace{5mm}First Sol.\hspace{4mm}}}
                                   &
                                   \multicolumn{2}{c}{\underline{\hspace{5mm}Final Sol.\hspace{4mm}}}
                                   & \multirow{2}{*}{Succ.\%}\\
                                   & $t$ (ms) & $\cost$ & $t$ (ms) & $\cost$ &  \\ 
        \hline
        \multicolumn{6}{c}{Environment a} \\ 
        \hline
        SST                      & 10341.0 & 1.00 & 182846.0 & 0.91 & 100 \\
        \alg              & 4.2 & 0.86 & 4.2 & 0.86 & 100 \\
        \aoalg{}\textit{-large-$\delta$}              & 6.1 & 0.87 & 14.5 & 0.77 & 100 \\
        \aoalg{}\textit{-small-$\delta$}      & 1559.7 & 0.81 & 89458.9 & 0.74 & 100 \\
        \hline
        \multicolumn{6}{c}{Environment b} \\ 
        \hline
        SST                    & 24294.5 & 1.00 & 141664.0 & 0.83 & 100 \\
        \alg                   & 3.2 & 0.67 & 3.2 & 0.67 & 100 \\
        \aoalg{}\textit{-large-$\delta$}     & 4.7 & 0.65 & 12.0 & 0.61 & 100 \\
        \aoalg{}\textit{-small-$\delta$}     & 1419.5 & 0.62 & 56694.8 & 0.58 & 100 \\
        \hline
        \multicolumn{6}{c}{Environment c} \\ 
        \hline
        SST                     & 154149.0 & 1.00 & 218667.0 & 0.97 & 87 \\
        \alg               & 7.5 & 0.94 & 7.5 & 0.94 & 100 \\
        \aoalg{}\textit{-large-$\delta$}            & 13.4 & 0.91 & 34.8 & 0.80 & 100 \\
        \aoalg{}\textit{-small-$\delta$}      & 2274.8 & 0.75 & 250972.0 & 0.64 & 100 \\
        \hline
        \multicolumn{6}{c}{Environment d} \\ 
        \hline
        SST              & NaN & NaN & NaN & NaN & 0 \\
        \alg           & NaN & NaN & NaN & NaN & 0 \\
        \aoalg{}\textit{-large-$\delta$}         & NaN & NaN & NaN & NaN & 0 \\
        \aoalg{}\textit{-small-$\delta$}     & 4320.7 & 1.00 & 71019.9 & 0.91 & 100 \\
        \hline
    \end{tabular}
\end{table}

For the Dubins Airplane system in Table~\ref{table:AO_KPAX_Results_Env2}, we see largely similar results, where \aoalg{}\textit{-large-$\delta$} achieves first solutions on average $6000\times$ faster than SST, with first solution costs that are $78\%$ of SST’s first solution cost. \aoalg{}\textit{-large-$\delta$} converges to solutions that are $72\%$ of SST’s first solution cost, while \aoalg{}\textit{-small-$\delta$} converges to solutions that are $65\%$ of the cost. Notably, in Env.~d, only \aoalg{}\textit{-small-$\delta$} is able to find a valid solution within the planning time. This environment posed a particularly challenging problem due to the Dubins Airplane system’s large turning radius and the presence of long, narrow passages with tight turns. Due to its fine decomposition and large expected tree size, 
\aoalg{}\textit{-small-$\delta$} is able to find an first solution in $4.3$ seconds and improve that solution by an additional $9\%$ over the five-minute planning period.

\begin{table}[t]
    \centering
    \caption{
    Benchmark results over 100 trials with a 300-second maximum planning time for the 12D Nonlinear Quadcopter system.
    }
    \label{table:AO_KPAX_Results_Env3}
    \begin{tabular}{l r   rrrr }
        \toprule
                                   \multirow{2}{*}{Algorithm}   & 
                                   \multicolumn{2}{c}{\underline{\hspace{4mm}First Sol.\hspace{4mm}}}
                                   &
                                   \multicolumn{2}{c}{\underline{\hspace{5mm}Final Sol.\hspace{4mm}}}
                                   & \multirow{2}{*}{Succ.\%}\\
                                   & $t$ (ms) & $\cost$ & $t$ (ms) & $\cost$ &  \\ 
        \hline
        \multicolumn{6}{c}{Environment a} \\ 
        \hline
        SST         & 86931.0 & 1.00 & 218315.0 & 0.92 & 84 \\
        \alg           & 83.7 & 0.72 & 98.2 & 0.68 & 100 \\
        \aoalg{}\textit{-large-$\delta$}            & 159.3 & 0.72 & 1764.5 &  0.65 & 100 \\
        \aoalg{}\textit{-small-$\delta$}      & 4214.2 & 0.69 & 176232.0 & 0.62 & 100 \\
        \hline
        \multicolumn{6}{c}{Environment b} \\ 
        \hline
        SST                 & 132363.0 & 1.00 & 221516.0 & 0.96 & 53 \\
        \alg                 & 88.8 & 0.55 & 99.8 & 0.52 & 100 \\
        \aoalg{}\textit{-large-$\delta$}   & 145.3 & 0.55 & 1396.9 & 0.48 & 100 \\
        \aoalg{}\textit{-small-$\delta$}      & 4015.4 & 0.51 & 185098.5 & 0.45 & 100 \\
        \hline
        \multicolumn{6}{c}{Environment c} \\ 
        \hline
        SST                     & 209123.0 & 1.00 & 271112.5 & 0.93 & 42 \\
        \alg           & 147.6 & 0.65 & 163.1 & 0.62 & 100 \\
        \aoalg{}\textit{-large-$\delta$}                & 263.0 &  0.62 & 1406.6 & 0.53 & 100 \\
        \aoalg{}\textit{-small-$\delta$}      & 5631.6 & 0.59 & 246833.0 & 0.47 & 100 \\
        \hline
        \multicolumn{6}{c}{Environment d} \\ 
        \hline
        SST                       & NaN & NaN & NaN & NaN & 0 \\
        \alg           & 287.7 & 0.98 & 303.4 &  0.97 & 100 \\
        \aoalg{}\textit{-large-$\delta$}             & 1038.1 &  1.00 & 2918.0 & 0.88 & 100 \\
        \aoalg{}\textit{-small-$\delta$}      & 11504.0 & 0.93 & 221927.0 & 0.78 & 100 \\
        \hline
    \end{tabular}
\end{table}

\begin{figure}[ht]
    \centering
    \includegraphics[width=0.9\columnwidth]{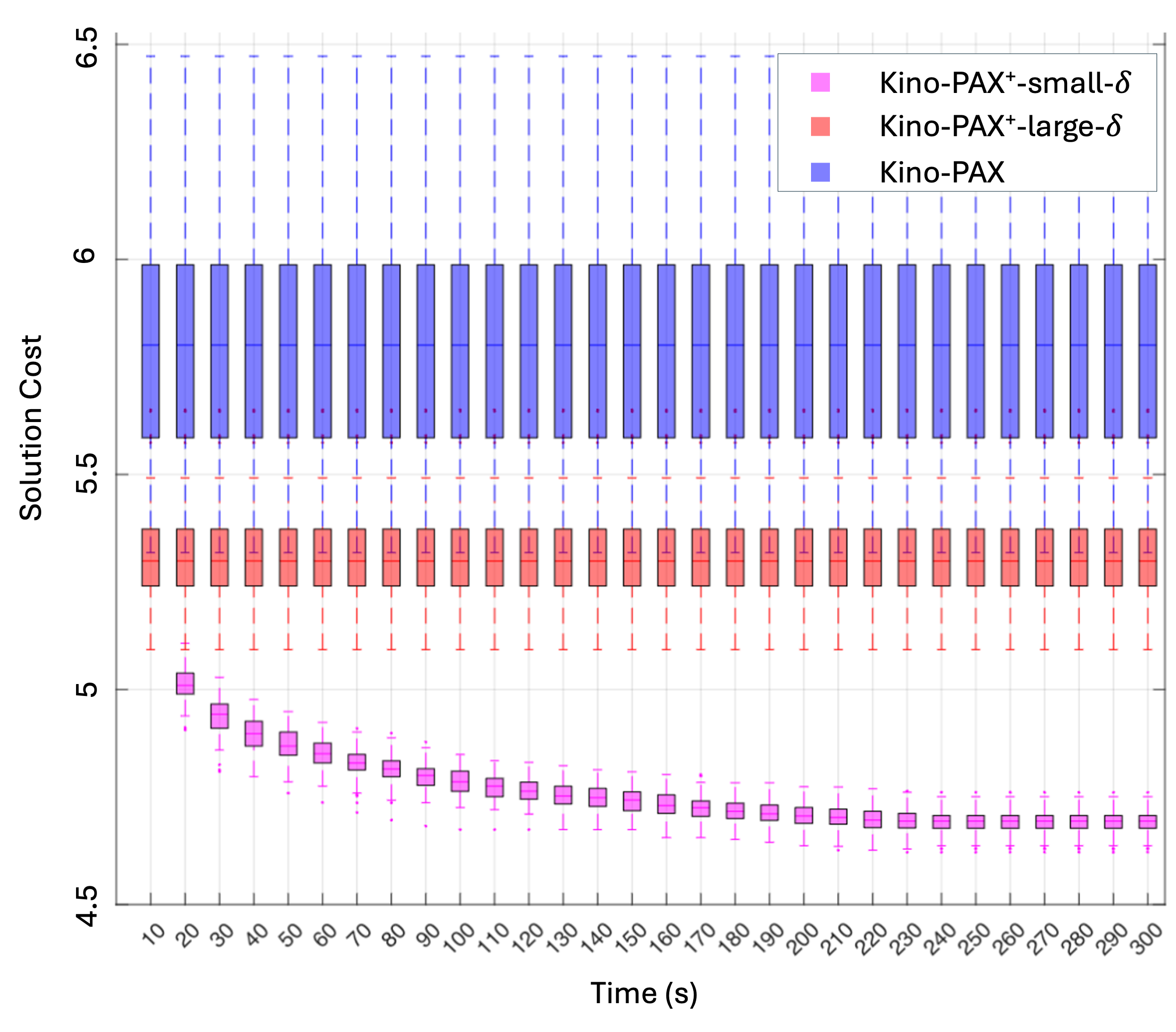}
    \caption{Planning performance of \aoalg{}\textit{-small-$\delta$}, \aoalg{}\textit{-large-$\delta$}, and \alg on the 12D nonlinear quadcopter system in the zig-zag environment d. over a $300\,\mathrm{s}$ planning duration.}
    \label{fig:box_KPAX_AOKPAX_AOKPAX}
\end{figure}

Lastly, Table~\ref{table:AO_KPAX_Results_Env3} and Figure~\ref{fig:box_KPAX_AOKPAX_AOKPAX} present the planning results for the 12D nonlinear quadcopter system. Both tuning variants of \aoalg consistently outperform SST across all evaluated metrics, including time to first solution, first solution cost, and final solution cost. In these experiments, SST successfully found solutions in only $84\%$, $53\%$, $42\%$, and $0\%$ of trials in environments~a–d, respectively. In contrast, both \aoalg{}\textit{-large-$\delta$} and \aoalg{}\textit{-small-$\delta$} achieved a $100\%$ success rate across all environments. On average, \aoalg{}\textit{-large-$\delta$} finds the first solution $750\times$ faster than SST and with a cost that is $63\%$ of SST’s first solution cost. It converges to a final solution that is $55\%$ of SST’s first solution cost, while \aoalg{}\textit{-small-$\delta$} converges to a final solution that is $51\%$ of SST’s. In comparison, SST converges to a final solution that is $94\%$ of its own first solution cost.

Overall, \aoalg outperforms existing serial asymptotically
near-optimal planners in \emph{all} assessed metrics. Additionally, \aoalg{}\textit{-small-$\delta$} shows that the algorithm remains effective and reliable for extremely challenging problems. Finally, unlike \alg, \aoalg is consistently able to optimize towards the lowest final costs across all environments.

\subsection{Embedded GPU and Sensitivity to $\delta$}

\begin{figure*}
    \centering
    \begin{subfigure}[t]{.37\textwidth}
        \includegraphics[width=\linewidth]{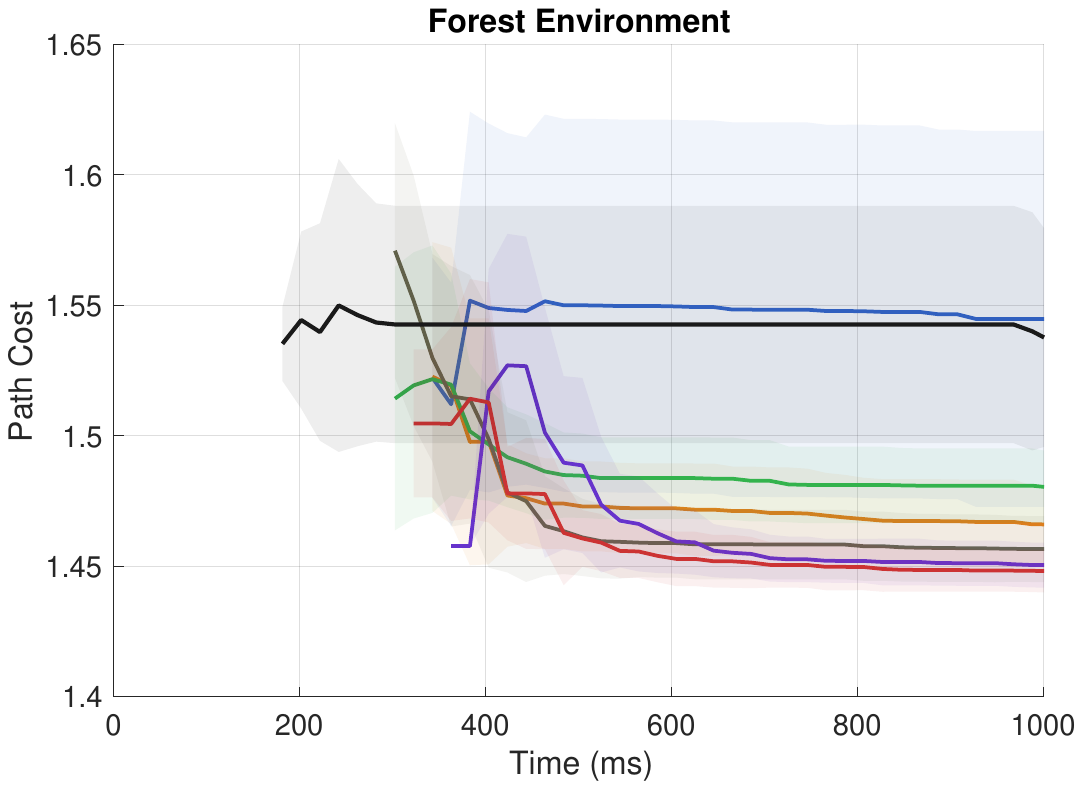}
    %
    ~~~~~
    \end{subfigure}
    \begin{subfigure}[t]{.16\textwidth}
        \vspace{-38mm}
        \includegraphics[width=\linewidth]{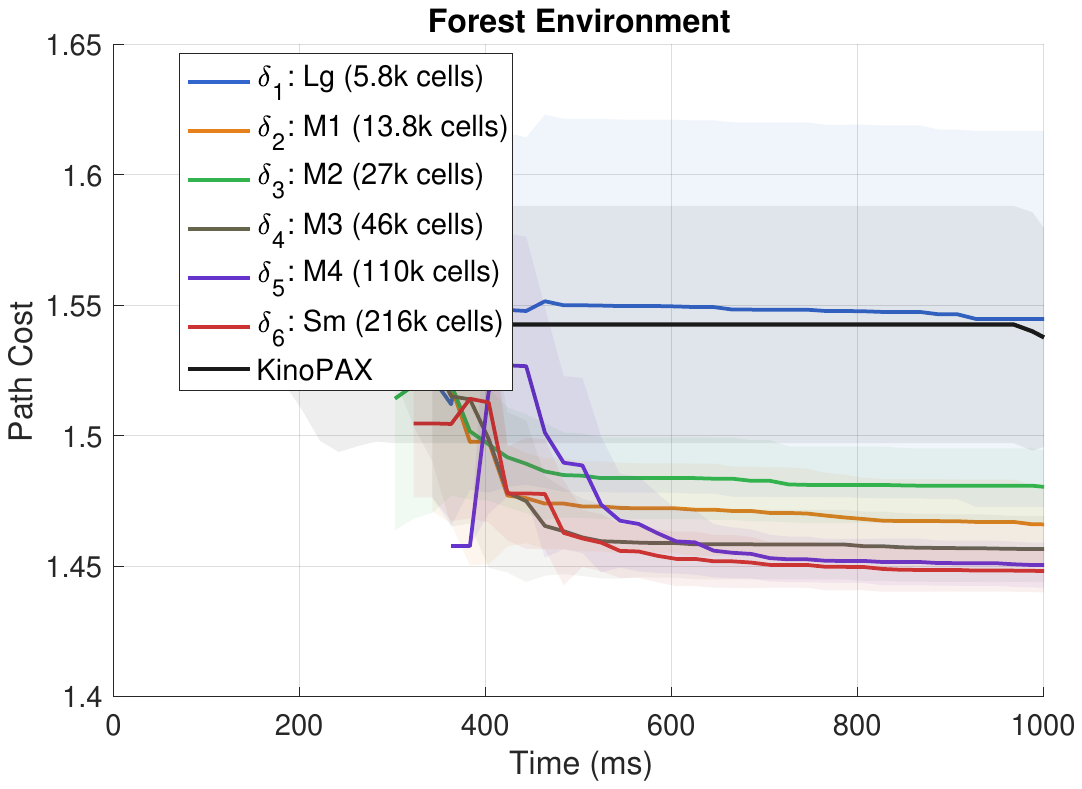}
    ~~~~~
    \end{subfigure}
    \begin{subfigure}[t]{.37\textwidth}
        \includegraphics[width=\linewidth]{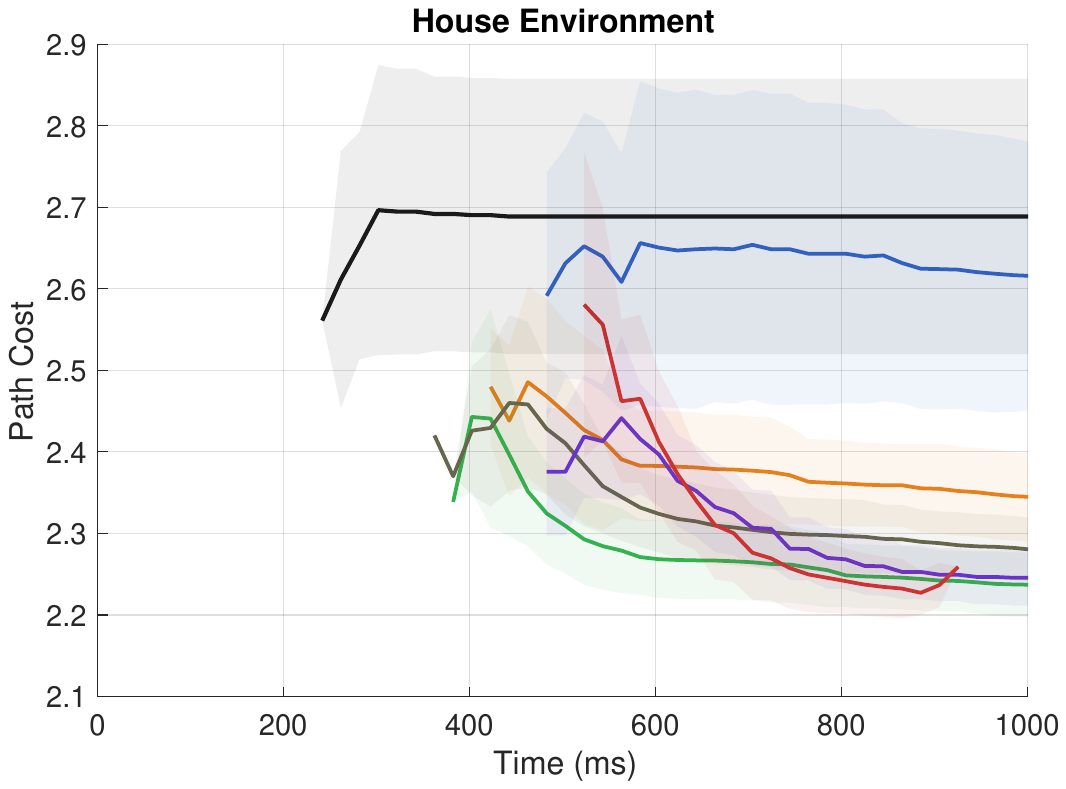}
    \end{subfigure}
    \vspace{-3mm}
    \caption{Benchmarks on Jetson Orin Nano with various $\delta$ sizes.}
    \label{fig: jetson}
\end{figure*}

To assess deployability and characterize sensitivity to $\delta$, we additionally benchmark \aoalg on an embedded Jetson Orin Nano GPU across two of the four environments using the 6D double integrator system. Here, rather than the extreme values of $\delta$ used above, we assess a range of intermediate values.

The results are shown in Fig.~\ref{fig: jetson}. Across all six decomposition ($\delta$) sizes, \aoalg produces a first solution within roughly $300–500 \,\mathrm{ms}$ and continues to reduce path cost over the $1\,\mathrm{s}$ planning budget, in both the Forest and House environments. 
First-solution latency is at most $0.2\,\mathrm{s}$ slower than \alg, with comparable or lower cost, demonstrating that the asymptotic near-optimality of \aoalg comes at minimal overhead to first-solution speed even on resource-constrained hardware.
The decompositions $\delta_2$–$\delta_6$ take similar times to produce an initial solution but keep improving throughout the $1 \,\mathrm{s}$ horizon, converging to a noticeably lower path cost than \alg by the end of planning in both environments. 

More importantly, solution quality is similar across the four intermediate resolutions $\delta_2$–$\delta_5$, with the finest decomposition $\delta_6$ achieving the lowest overall cost but showing greater variance (wider shaded bands) during convergence, reflecting the added planning overhead of a much larger region count. 

These results show that the asymptotic near-optimality of \aoalg holds across a broad range of $\delta$ on embedded hardware. In practice,
any mid-range $\delta$ performs well, i.e., offers a strong quality-to-speed trade-off, and \aoalg does not require careful tuning of this parameter. Finally, and importantly, the near-optimality guarantee of \aoalg costs little in first-solution latency relative to \alg.

%% file: sections/conclusion.tex
We have introduced \aoalg, a novel parallelized asymptotically near-optimal kinodynamic motion planner. Benchmark results show that initial solution times are orders of magnitude faster than those of existing asymptotically near-optimal kinodynamic SBMPs, with significantly lower initial solution costs. Furthermore, it consistently outperforms its predecessor, \alg, in solution costs. For future work, we plan to conduct deployments in real-world robotic systems.

%% file: sections/GPU_implementation.tex
\subsection{Hyperparameters and Implementation Considerations}
\label{appendix: hyperparameters}

In this appendix, we discuss how \aoalg can be tuned to match a problem's difficulty, and provide discussions of \aoalg that enable efficient parallelism. The three aspects are the expected tree size, space decomposition and using atomic operations.

\subsubsection{Expected tree size tuning}

Introduced in \alg by \cite{Perrault2025kinopax}, the expected tree size $t_e$ is a fixed hyperparameter that specifies the maximum number of nodes allowed in $\mathcal{T}$ and sets bounds for the sizes of $V_A$, $V_I$, $V_T$, and $V_U$. Fixing $t_e$ improves the computational efficiency of \aoalg by eliminating high-latency operations associated with dynamic memory resizing on many-core devices. Additionally, a fixed tree size enables users to precisely tune algorithm performance, balancing the trade-off between runtime and solution quality.

Adjusting the value of $t_e$ affects \aoalg's behavior in two ways. First, it influences $\lambda$, which is calculated by:
\begin{equation}
\label{eq:AO_BF}
    \lambda = \left\lfloor\frac{t_e}{|V_A|}\right\rfloor.
\end{equation}
This relation ensures maximal propagation of each active node within a single iteration of the \texttt{Propagate} subroutine while guaranteeing that \texttt{Propagate} does not generate more than $t_e$ new nodes, which would result in a segmentation fault. Increasing $t_e$ leads to the generation of more trajectory segments (Algorithm~\ref{algo:ao_Propagate}), potentially yielding lower-cost trajectories. However, higher $t_e$ values require greater computational resources and additional memory to accommodate larger node sets and trees, ultimately increasing iteration times. This trade-off is examined in detail in experiments in Section~\ref{sec:OKPAX_experiments}.

Secondly, $t_e$ controls the total number of nodes stored within $\mathcal{T}$. For complex problems, particularly those involving high dimensional systems, a larger $t_e$ is recommended to enable sufficient exploration of the state space and identification of high-quality solutions. Due to \aoalg's structured search strategy, hyperparameter tuning is quite straightforward.
This is demonstrated in Section~\ref{sec:OKPAX_experiments}, where we consistently achieve high-quality solutions across a wide range of $t_e$ values and problem configurations.

\subsubsection{Space Decomposition Tuning}

\aoalg allows tuning through its spatial decomposition $\mathcal{R}$, where adjusting the fineness of the decomposition changes algorithm behavior.

With a coarser decomposition, fewer nodes are accepted into $\mathcal{T}$, as multiple nodes are more likely to fall within the same region. This configuration improves memory efficiency by enabling solutions to be found with a relatively small $t_e$. Moreover, a coarser discretization helps prevent \aoalg from exhausting memory prematurely, allowing the algorithm to run for a sufficient number of iterations to iteratively improve the solution. However, if the discretization is too coarse, it can hinder the algorithm’s ability to propagate nodes between regions, potentially preventing the discovery of feasible solutions.

Additionally, a finer decomposition significantly changes the algorithm's behavior. If the decomposition is excessively fine relative to a small $t_e$, newly generated nodes frequently populate unvisited regions, rapidly exhausting available memory before sufficient iterations can be executed. In contrast, pairing a fine decomposition with a large $t_e$ enables high-quality solutions and sufficient iterations but increases per-iteration runtime, delaying the discovery of the first solution. As shown in Section~\ref{sec:OKPAX_experiments}, the spatial decomposition in \aoalg can be tuned effectively, with quality solutions obtained across a wide range of decompositions and environment complexities for a given dynamical system.

\subsubsection{Data Structure Representation}

Unlike traditional SBMPs, which represent the planning tree using node objects, \aoalg represents $\mathcal{T}$ as a one-dimensional floating-point array. Prior to planning, the user specifies the expected tree size $t_e$, determining the GPU memory allocation. Specifically, $\mathcal{T}$ is structured as a floating-point array of size:
\[
\mathcal{T}_{\text{size}} = t_e \cdot (s_{\text{dim}} + c_{\text{dim}} + 2),
\]
where $s_{\text{dim}}$ denotes the state dimension, $c_{\text{dim}}$ the control dimension, and the additional $2$ represents the control duration and parent node index. For instance, a node in $\mathcal{T}$ for a 6D Double Integrator is structured as (Fig.~\ref{fig:T_node_structure}):

\begin{figure}[h]
  \centering
  \includegraphics[width=\linewidth]{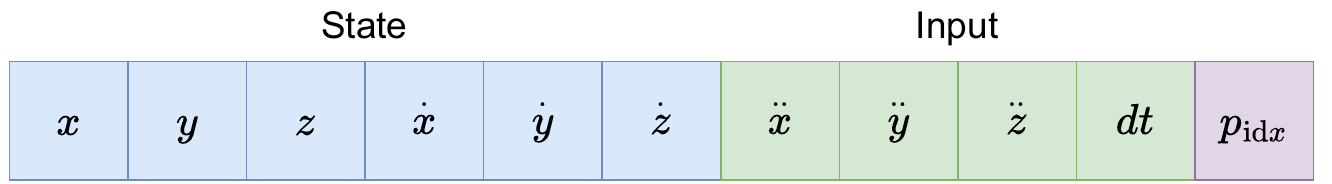}
  \caption{Example node structure in $\mathcal{T}$ for a 6D Double Integrator system.}
  \label{fig:T_node_structure}
\end{figure}

This representation significantly enhances efficiency for parallel computation. Threads independently read states from assigned indices of interest in $\mathcal{T}$, propagate trajectory segments, and safely write results back to predetermined locations. A key challenge arises in efficiently identifying these indices of interest, essentially a parallelized graph search over active nodes.

To facilitate parallel graph searches, node sets in \aoalg are represented as boolean mask arrays on the GPU. These arrays have length $t_e$, where a value of `1' indicates the corresponding node is included in the set. For example, with $t_e = 5$ and nodes $0$ and $3$ active in $V_A$, the representation is as (Fig.~\ref{fig:V_A_boolean_mask}):

\begin{figure}[ht!]
  \centering
  \includegraphics[width=0.6\columnwidth]{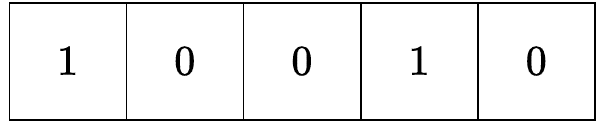}
  \caption{Boolean mask representation for active nodes in $V_A$.}
  \label{fig:V_A_boolean_mask}
\end{figure}

To efficiently perform a graph search on $\mathcal{T}$ and extract active indices (e.g., nodes 0 and 3 in the example), we utilize two GPU-optimized operations. First, an \texttt{exclusiveScan} operation computes the cumulative sum of the $V_A$ mask, storing the results in a separate array, $S$. Our implementation employs optimized parallel operations provided by CUDA's Thrust library~\cite{guide2020cuda}. The result of \texttt{exclusiveScan} for the example is (Fig.~\ref{fig:exclusive_scan_result}):

\begin{figure}[ht]
  \centering
  \includegraphics[width=0.6\columnwidth]{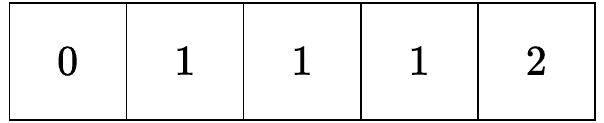}
  \caption{Resulting cumulative sum array $S$ after applying \texttt{exclusiveScan} to $V_A$.}
  \label{fig:exclusive_scan_result}
\end{figure}

Next, arrays $S$ and $V_A$ serve as inputs to a second parallel GPU routine, \texttt{GraphSearch}, which compacts active indices from $V_A$ into a new integer array, $I$, also of size $t_e$. The resulting array $I$ for our example and pseudocode for \texttt{GraphSearch} are provided below (Fig.~\ref{fig:GraphSearch_result}):

\begin{figure}[ht]
  \centering
  \includegraphics[width=0.6\columnwidth]{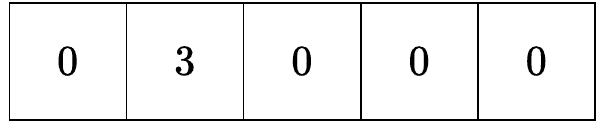}
  \caption{Resulting array $I$ containing active indices after executing \texttt{GraphSearch}.}
  \label{fig:GraphSearch_result}
\end{figure}

\begin{algorithm}[h]
    \caption{\texttt{GraphSearch}}
    \label{algo:GraphSearch}
    \SetKwInOut{Input}{Input}
    \SetKwInOut{Output}{Output}
    \Input{$V_A, S, I$}
    \Output{Updated array $I$}
    $tid \gets $ unique thread index\;
    \If{$tid \geq |V_A|$ \textbf{or} $\neg V_A[tid]$}{
        \Return{}\;
    }
    $I[S[tid]] \gets tid$\;
\end{algorithm}

The \texttt{GraphSearch} subroutine launches $t_e$ threads, each assigned to a unique index in $V_A$. Each thread determines its unique ID, exiting immediately if its corresponding node is inactive (Algorithm~\ref{algo:GraphSearch}, Lines 1--3). Active threads compute their target position in $I$ using $S[tid]$. Since $S$ remains constant until encountering subsequent active nodes, each thread safely writes its unique identifier to a distinct position in $I$ (Algorithm~\ref{algo:GraphSearch}, Line 4). After all threads complete execution, array $I$ compactly represents active node indices, and the final value in array $S$ indicates the total number of active nodes. This compact representation provides essential data for initiating the subsequent parallel propagation phase.

\subsubsection{Atomic Operations}
\label{appendix: atomic operations}

Given our design choices, data structure layout, and maintenance methods, we now detail the computational workflow of \aoalg. Assume an expansion set $V_A$ has undergone the parallel \texttt{exclusiveScan} and \texttt{GraphSearch} operations to populate arrays $S$ and $I$. To initiate the \texttt{Propagate} subroutine Algorithm~\ref{algo:ao_Propagate}, only a single integer (the count of active nodes in $V_A$ obtained from the final index of $S$) must be transferred from the GPU to the CPU. Using this integer, the CPU orchestrates the GPU's \texttt{Propagate} kernel by launching $|V_A| \cdot \lambda$ threads. This ensures the number of threads never exceeds $t_e$, thus preventing memory out-of-bounds errors.

We now describe the parallel \texttt{Propagate} routine in detail for \aoalg (Algorithm~\ref{algo:AOKPAX_P_Prop}). This GPU kernel takes as inputs $\mathcal{T}$, $I$, $O$, $\mathcal{R}$, $\lambda$, $V_U$, and an empty tree structure $\mathcal{U}$ (with identical size to $\mathcal{T}$).

Each thread first calculates its unique identifier ($tid$), ranging from 0 to $|V_A|\cdot\lambda - 1$. Using this identifier, it obtains its assigned index from $I$ by evaluating $I[\lfloor tid/\lambda \rfloor]$. This indexing scheme ensures exactly $\lambda$ threads access each active node, while no threads exceed the compact set $I$. Threads then read their assigned state data $x$ from the array $\mathcal{T}$ using index arithmetic involving the system dimension (e.g., for the 6D double integrator from Fig.~\ref{fig:T_node_structure}, $\text{DIM} = 11$) (Algorithm~\ref{algo:AOKPAX_P_Prop}, Lines 1--3).

After obtaining the node $x$, each thread randomly generates controls and durations, propagating $x$ to a new state $x'$ (Algorithm~\ref{algo:AOKPAX_P_Prop}, Lines 4--5). Threads then verify trajectory validity, determine the corresponding region $\mathcal{R}_i$ for state $x'$, and calculate the cost to reach $x'$ (Lines 6--8).

To prevent race conditions, threads use atomic operations to update the minimum cost of $\mathcal{R}_i$ (Line 9). This atomic operation ensures that if multiple threads attempt to write to the memory location of $\mathcal{R}_i$ simultaneously there is an ordering constraint, preventing data races. If a thread's node $x'$ represents the minimal cost within its region, the thread writes $x'$ into the empty tree structure $\mathcal{U}$ at index $tid \cdot \text{DIM}$, guaranteeing unique write locations for each thread. The thread then marks the corresponding position in the boolean mask array $V_U$ to indicate the addition of a promising node (Lines 10--12). Lastly if a thread's node $x'$ is in the goal region and has a lower cost than the best-known goal satisfying trajectory $\mathbf{x}$ it atomically updates the best-known solution cost (Lines 13--14).

The rationale for using $\mathcal{U}$ rather than directly writing to $\mathcal{T}$ is threefold: 
(i) direct writing to $\mathcal{T}$ requires knowledge of empty indices of $\mathcal{T}$ to avoid overwriting critical data,
(ii) writing directly to a partially populated $\mathcal{T}$ would produce sparse, inefficient memory usage, and 
(iii) direct writes limit parallelism since at most $t_e - |\mathcal{T}|$ threads could safely run without overwriting. Using a separate, initially empty structure $\mathcal{U}$ allows launching all $t_e$ threads in every iteration, with a subsequent compact transfer of new nodes to $\mathcal{T}$.

The complete pseudocode for the parallel propagation routine in \aoalg is provided below:

\begin{algorithm}
    \caption{\texttt{\aoalg Parallel Propagate}}
    \label{algo:AOKPAX_P_Prop}
    \SetKwInOut{Input}{Input}
    \SetKwInOut{Output}{Output}
    \Input{$\mathcal{T}, I, O, \mathcal{R}, \lambda, \mathcal{U}, V_U$}
    \Output{Updated arrays $\mathcal{U}$ and $V_U$}
    $tid \gets$ unique thread index\;
    $treeIdx \gets I[\lfloor tid / \lambda \rfloor]$\;
    $x \gets \mathcal{T}[treeIdx \cdot DIM : (treeIdx+1)\cdot DIM]$\;
    Randomly sample control $u$ and duration $dt$\;
    $x' \gets \rv{\texttt{PropagateODE}}(x, u, dt)$\;
    \If{trajectory from $x$ to $x'$ is valid}{
        Map $x'$ to region $\mathcal{R}_i$\;
        $cost(x') \gets cost(x) + \texttt{getCost}(x,x')$\;
        $cost(\mathcal{R}_i) \gets \texttt{atomicMin}(cost(\mathcal{R}_i), cost(x'))$\;
        \If{$cost(x') = cost(\mathcal{R}_i)$}{
            $\mathcal{U}[tid \cdot DIM : (tid+1)\cdot DIM] \gets x'$\;
            $V_U[tid] \gets 1$\;
            \If{$x' \in X_{goal}$}{
                $cost(\mathbf{x}) \gets \texttt{atomicMin}(cost(\mathbf{x}'), cost(x'))$\;
            }
        }
    }
\end{algorithm}

After completing the \texttt{Propagate} routine, the unexplored tree structure $\mathcal{U}$ and boolean mask array $V_U$ are populated, and the minimum costs for visited regions in $\mathcal{R}$ are updated. With these updated region costs, we proceed to prune nodes from the tree $\mathcal{T}$. The GPU implementation of node pruning closely follows the serialized pseudocode (Algorithm~\ref{algo:ao_NodePruning}); no atomic operations are necessary since each GPU thread independently inspects a single node at index $tid \cdot DIM$ and updates the corresponding boolean value in $V_A[tid]$ to indicate whether the node remains active or is pruned.

However, a key implementation detail involves filtering nodes added to $\mathcal{U}$ that no longer represent the lowest-cost node within their respective regions. This situation arises when one thread initially adds a node as the region's lowest-cost node, only for a subsequent thread (within the same subprocess) to identify a node with lower cost in the same region. To resolve this, we perform an additional GPU subprocess using our graph search routine (Algorithm~\ref{algo:GraphSearch}) to update array $I$ with active indices from $V_U$. We then launch $|V_U|$ threads to reevaluate and filter out these higher-cost nodes, ensuring that only the lowest-cost nodes remain marked as active in $V_U$.

Next, we detail the GPU implementation of the \texttt{UpdateTree} subroutine. This step efficiently transfers new nodes from the temporary tree $\mathcal{U}$ to the main tree $\mathcal{T}$ and identifies if an improved solution trajectory has been found. First, we perform another graph search on $\mathcal{U}$ over the boolean set $V_U$, identifying active indices. Then, we launch a GPU thread for each node in $\mathcal{U}$ and transfer it to a compact position in $\mathcal{T}$ at index $|\mathcal{T}| \cdot DIM + tid \cdot DIM$. This ensures efficient, concurrent, and safe writing without overlapping indices. Lastly, threads check if their newly transferred node is within the goal region and has cost equal to that updated in Algorithm~\ref{algo:AOKPAX_P_Prop}. If so, the thread updates a solution integer variable with the tree index of this node, thus efficiently maintaining the optimal solution trajectory.

We then repeat the three core subprocesses: \texttt{Node Propagate}, \texttt{Node Pruning}, and \texttt{Tree Update}. Critically, we designed this cycle to minimize latency between subprocesses by reducing both the frequency and volume of data transfers between the CPU and GPU. Figure~\ref{fig:implementation} illustrates this repeated cycle and highlights the latency periods.

\begin{figure}[h!]
    \centering
    \includegraphics[width=\linewidth]{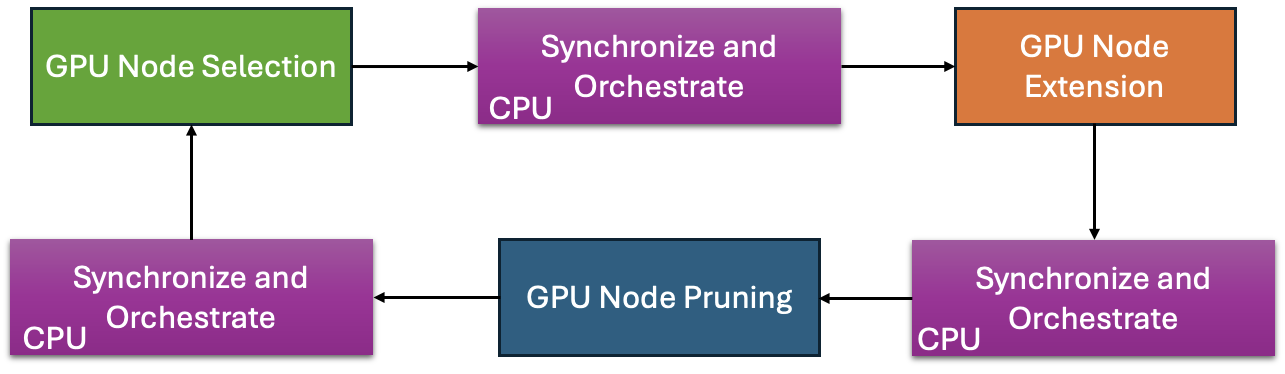}
    \caption{Planning cycle of \aoalg, consisting of repeated GPU subprocesses. A CPU-side synchronization occurs between each subprocess to coordinate data exchange and maintain execution order.}
    \label{fig:implementation}
\end{figure}

\begin{remark}
If collision checking requires significant computational resources, particularly in obstacle-dense environments, performance can be improved by first comparing the cost to reach \( x' \) against the lowest recorded cost for region \( \mathcal{R}_i \). Collision checking is then performed only if \( x' \) is the best recorded node in \( \mathcal{R}_i \).
\end{remark}

\subsection{Empirical Evaluation Details}
\label{appendix: empirical eval details}

The two variants of \aoalg are \aoalg{}\textit{-large-$\delta$} and \aoalg{}\textit{-small-$\delta$}. These configurations vary the decomposition size $\delta$ and the pre-allocated expected tree size $t_e$. The \aoalg{}\textit{-large-$\delta$} strategy uses a coarser decomposition and smaller expected tree sizes, emphasizing rapid first solutions and early termination. Specifically, for the 6D double integrator: 
$t_e= 10^5$ with 
$27,000$ regions; for the 6D Dubins airplane: 
$t_e=3 \times 10^5$ with 
52,000 regions; and for the 12D nonlinear quadcopter: 
$t_e=10^6$ with 
$10^5$ regions. Conversely, the \aoalg{}\textit{-small-$\delta$} strategy uses 
a finer decomposition and larger tree sizes ($t_e=10^8$ with 
$10^7$ regions for all systems.
), focusing on extensive state space exploration.